\theoremstyle{definition}
\newtheorem{definition}{Definition}
\newtheorem{assumption}{Assumption}
\theoremstyle{revirtual simulatork}
\newtheorem{revirtual simulatork}{Revirtual simulatork}
\theoremstyle{plain}
\newtheorem{theorem}{Theorem}
\newtheorem{proposition}{Proposition}
\newcommand{\tmop}[1]{\ensuremath{\operatorname{#1}}}
\def\BibTeX{{\rm B\kern-.05em{\sc i\kern-.025em b}\kern-.08em
    T\kern-.1667em\lower.7ex\hbox{E}\kern-.125emX}}
\begin{document}

\title{\fontsize{23.3}{24}\selectfont Generative AI-empowered Simulation for Autonomous Driving in Vehicular Mixed Reality  Metaverses\\
\author{Minrui Xu, Dusit Niyato, \emph{Fellow, IEEE}, Junlong Chen, Hongliang Zhang, Jiawen Kang,\\ Zehui Xiong, Shiwen Mao, \emph{Fellow, IEEE}, and Zhu Han, \emph{Fellow, IEEE}
		\thanks{Minrui~Xu and Dusit~Niyato are with the School of Computer Science and Engineering, Nanyang Technological University, Singapore 308232, Singapore (e-mail: minrui001@e.ntu.edu.sg; dniyato@ntu.edu.sg).}
        \thanks{Hongliang~Zhang is with the School of Electronics, Peking University, Beijing 100871, China (e-mail: hongliang.zhang@pku.edu.cn).}
		\thanks{Junlong Chen and Jiawen~Kang are with the School of Automation, Guangdong University of Technology, China (e-mail: 3121001036@mail2.gdut.edu.cn; kavinkang@gdut.edu.cn).}
		\thanks{Zehui~Xiong is with the Pillar of Information Systems Technology and Design, Singapore University of Technology and Design, Singapore 487372, Singapore (e-mail: zehui\_xiong@sutd.edu.sg).}
		\thanks{Shiwen~Mao is with the Department of Electrical and Computer Engineering, Auburn University, Auburn, AL 36849-5201 USA (email: smao@ieee.org).}
		\thanks{Zhu~Han is with the Department of Electrical and Computer Engineering, University of Houston, Houston, TX 77004 USA, and also with the Department of Computer Science and Engineering, Kyung Hee University, Seoul 446-701, South Korea (e-mail: zhan2@uh.edu).}
	}
}

\maketitle

\begin{abstract}
In the vehicular mixed reality (MR) Metaverse, the distance between physical and virtual entities can be overcome by fusing the physical and virtual environments with multi-dimensional communications in autonomous driving systems. Assisted by digital twin (DT) technologies, connected autonomous vehicles (AVs), roadside units (RSU), and virtual simulators can maintain the vehicular MR Metaverse via digital simulations for sharing data and making driving decisions collaboratively. 
However, large-scale traffic and driving simulation via realistic data collection and fusion from the physical world for online prediction and offline training in autonomous driving systems are difficult and costly.
In this paper, we propose an autonomous driving architecture, where generative AI is leveraged to synthesize unlimited conditioned traffic and driving data in simulations for improving driving safety and traffic efficiency. First, we propose a multi-task DT offloading model for the reliable execution of heterogeneous DT tasks with different requirements at RSUs. Then, based on the preferences of AV's DTs and collected realistic data, virtual simulators can synthesize unlimited conditioned driving and traffic datasets to further improve robustness. 
Finally, we propose a multi-task enhanced auction-based mechanism to provide fine-grained incentives for RSUs in providing resources for autonomous driving. The property analysis and experimental results demonstrate that the proposed mechanism and architecture are strategy-proof and effective, respectively.

\end{abstract}

\begin{IEEEkeywords}
Autonomous driving, traffic and driving simulations, generative artificial intelligence, auction theory.
\end{IEEEkeywords}

\section{Introduction}

The vehicular mixed reality (MR) Metaverse is envisioned as a promising solution for realizing autonomous driving by fusing the physical and virtual vehicular networks~\cite{zhou2022vetaverse, zhang2023location}. The multi-dimensional communications among physical and virtual entities can surrender the distance of ``data islands" on roads for
improving road safety and traffic efficiency while reducing energy consumption and carbon emissions~\cite{xu2022full}. Assisted by digital twin (DT) technologies, autonomous vehicles (AV) utilize advanced sensors, e.g., ultrasonic radars, cameras, and LiDAR, to collect data from their surrounding environments for constructing virtual representations in the virtual space~\cite{hui2022collaboration}. Then, AVs can make driving decisions, such as driving model selection and motion planning, via artificial intelligence (AI) methods. Even though panoramic cameras and high-class LiDAR are equipped with AVs, each AV can only collect limited environment data and cannot perceive the whole environment, e.g., occlusions~\cite{xiao2022perception}. Therefore, multiple connected AVs, roadside units (RSUs), and virtual simulators can share and fuse sensing data in the virtual space, to perceive the complete information of environments including occlusions. However, it is difficult and costly to collect realistic driving data on a large scale to train AVs directly in the physical world.

\begin{figure}[t]
    \centering
    \includegraphics[width=1\linewidth]{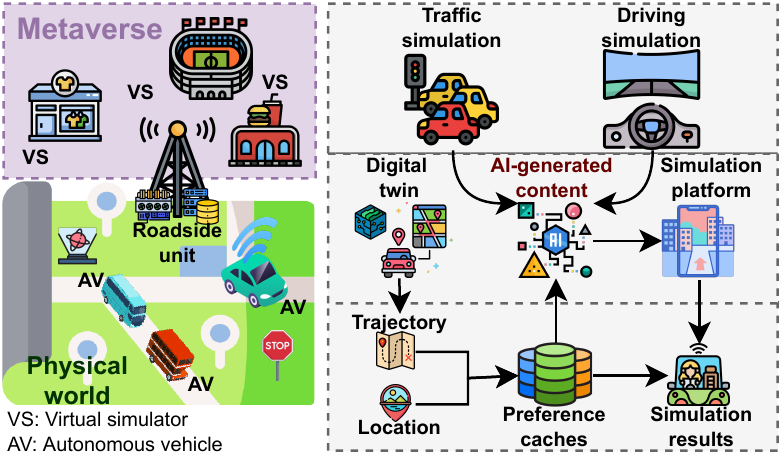}
    \caption{The MR Metaverse architecture of DT-assisted autonomous driving systems with traffic and driving simulations empowered by generative AI.}
    \label{fig:system}
\end{figure}

To address this issue, much effort from academia and industry has been devoted to developing platforms in the virtual space for traffic and driving simulations~\cite{feng2021intelligent, feng2018augmented}. By establishing virtual driving simulation platforms with DT~\cite{khan2022digital, hui2022collaboration} and MR~\cite{teng2021qoe, duan2022saliency} technologies, virtual representations of AVs can efficiently collect traffic and training data and cheaply test it on rare cases, such as virtual traffic accidents and car collisions under realistic scenes~\cite{niaz2021autonomous, drivesim}. 
Although traditional simulation platforms can generate an unlimited number of various driving experiences, the collected driving data requires a lot of manual work for labeling, which prevents the potential from being fully realized~\cite{feng2021intelligent}. Fortunately, with the multi-modal generative AI~\cite{gozalo2023chatgpt, xia2022generative, gilles2016computer}, the labeled traffic and driving data can be synthesized directly for virtual autonomous driving systems~\cite{kim2021drivegan}.
In this way, the process of using simulation platforms for autonomous driving training and evaluation is revolutionized by shifting from collecting and labeling data to directly synthesizing labeled data~\cite{kim2021drivegan, yang2020surfelgan}. Therefore, the simulation systems empowered by generative AI can generate large and diverse labeled driving datasets based on real-time road and weather conditions and user preferences for online prediction and offline training in autonomous driving systems.

Furthermore, in the vehicular Metaverse, connected AVs, RSUs, and virtual simulators need to construct the traffic and driving simulation platforms in the virtual space collaboratively. 
To update with virtual representations in virtual space, AVs continuously generate and offload multiple computation-intensive DT tasks to RSUs in online traffic simulation~\cite{hui2022collaboration}. Specifically, these DT tasks of each AV, including simulation, decision-making, and monitoring, are heterogeneous in requiring computing, communication resources, and deadlines.
In driving simulations, virtual simulators synthesize controllable traffic and driving data for satisfying specific requirements, e.g., passenger preferences and weather conditions, of the simulated driving tasks~\cite{drivesim}. In addition, the synthesized traffic and driving datasets can also be used in training virtual representations of AVs to further improve driving robustness.
These synchronization activities, e.g., DT task execution, traffic and driving simulations, and AV training, are demanding enormous communication and computing resources of RSUs for supporting autonomous driving systems~\cite{zhang2017non, zhang2022fruit}.
Therefore, developing effective multi-task incentive mechanisms that motivate RSUs to improve their use of communications and computing resources is imperative.

\begin{figure}[t]
    \centering
    \includegraphics[width=1\linewidth]{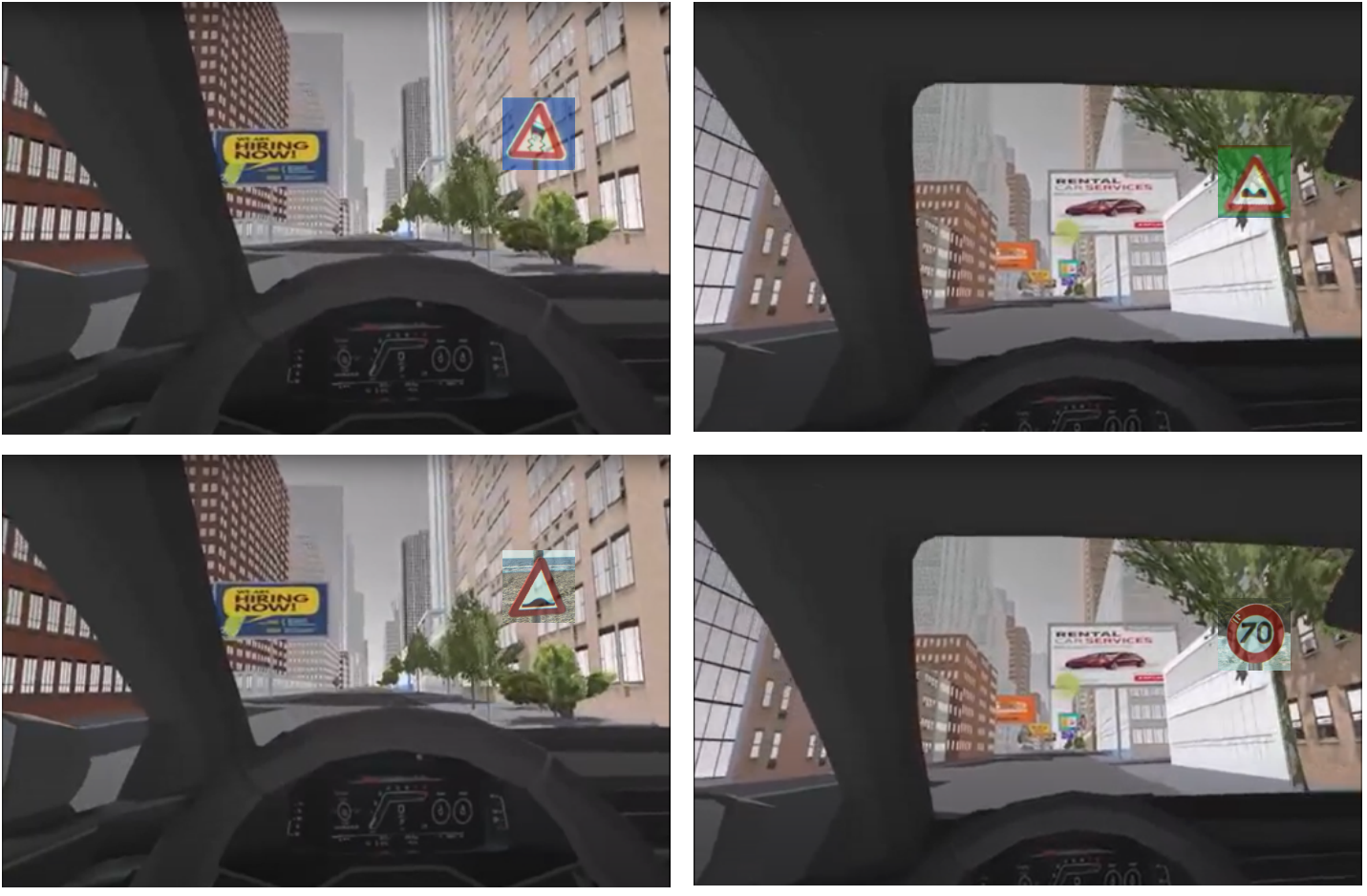}
    \caption{The screenshots of implemented driving simulation testbed~\cite{xu2022epvisa} with synthetic traffic signs generated by the proposed generative diffusion model, named TSDreamBooth.}
    \label{fig:simulation}
\end{figure}

As shown in Fig.~\ref{fig:system}, in this paper, we propose a novel DT-assisted autonomous driving architecture for the vehicular MR Metaverse, where generative AI is leveraged to synthesize massive and conditioned traffic and driving data for online and offline simulations. 
In detail, to improve reliability in DT task execution, we propose a multi-task DT offloading model where AVs can offload heterogeneous DT tasks with different deadlines to RSUs for real-time execution. 
To improve reliability in driving decision-making, virtual simulators can utilize the information in DTs, such as current location, historical trajectory, and user preferences, for online traffic simulations~\cite{zhong2022guided, yang2021hierarchical}. Moreover, based on the collected sensing data in the physical world and user preferences in DTs, virtual simulators can synthesize massive and conditioned driving data for AV training of virtual simulators via running generative AI models.
As a use case, we propose a diffusion model-based traffic sign generator, named TSDreamBooth, which is developed based on the DreamBooth~\cite{ruiz2022dreambooth} fine-tuned using Belgium traffic sign (BelgiumTS) dataset~\cite{mathias2013traffic}. The TSDreamBooth can be leveraged to generate virtual traffic sign images under different driving conditions and user preferences. Finally, we propose a multi-task enhanced auction-based mechanism to satisfy multi-dimensional requirements (e.g., prices and deadlines) of multiple DT tasks. We analyze the properties of the proposed auction and prove that it is strategy-proof and adverse-selection free. The experimental results demonstrate that the proposed framework can increase total social surplus by 150\%.


Our main contributions are summarized as follows:
\begin{itemize}
    \item To improve the safety and reliability of autonomous driving, we propose a novel DT-assisted MR Metaverse architecture with MR simulations empowered by generative AI. In this architecture, connected AVs, RSUs, and virtual simulators maintain digital simulation platforms in the virtual space, where data collecting, sharing, and utilizing among physical and virtual entities can improve driving safety and traffic efficiency in physical autonomous driving systems.
    \item In this architecture, we propose a reliable DT task offloading framework where AVs can continuously offload multiple DT tasks with different requirements to RSUs for updating DTs in the virtual space.
    \item In traffic and driving simulations, we consider generative AI-empowered virtual simulators to synthesize new driving data for AVs' decision-making and training. 
    \item To incentivize RSUs for providing resources in supporting autonomous driving systems, we propose a multi-task enhanced auction-based mechanism to offer fine-grained allocation results and prices for executing heterogeneous DT tasks with various deadlines. Based on the property analysis, the proposed mechanism is fully strategy-proof and adverse-selection free.
\end{itemize}
The rest of this paper is organized as follows. In Section~\ref{sec:related}, we review the related works. In Section~\ref{sec:system}, we discuss the proposed system architecture and its system model. Then, in Section~\ref{sec:mechanism}, we implement the multi-task enhanced auction-based mechanism. We demonstrate the experimental results in Section~\ref{sec:exp}, and provide a conclusion in Section~\ref{sec:con}.

\section{Related Works}\label{sec:related}

\subsection{DT-assisted Autonomous Driving}

In the vehicular MR Metaverse, DT technologies play an important role in assisting AVs to run more accurately and reliably in the physical transportation systems~\cite{jiang2022reliable}. In DT-assisted autonomous driving, driving data and AI techniques are both important to improve the capability and intelligence of AVs for real-time decision-making. Therefore, Niaz \textit{et al.} in~\cite{niaz2021autonomous} develop an autonomous driving test framework via DT technologies. Specifically, they consider using V2X communications to connect virtual space and physical space for driving safety and traffic efficiency improvement. Therefore, pure virtual driving, sensor data collecting, and real AV driving tests can be performed in this framework with limited test processes and working condition scenes. Furthermore, DT technologies can also be adopted in managing resources in vehicular networks. Specifically, Li \textit{et al.} in~\cite{li2022digital} propose a DT-driven computation offloading framework for minimizing computation latency and service discontinuity.
Considering social influence in vehicular networks, Zhang \textit{et al.} in~\cite{zhang2021digital} propose a DT-empowered content caching for improving caching scheduling in highly dynamic environments. In detail, the vehicular network is modeled as a digital twin where a learning-based caching algorithm is proposed to improve the system utility under dynamic content popularity, traffic density, and vehicle speed collaboratively.
However, existing work on digital twin-based autonomous driving systems only considers the single and homomorphic digital twin tasks in the system. This is incompatible with the highly heterogeneous computation in autonomous driving for adapting the dynamic vehicle status and driving environments. 






\subsection{Generative AI-empowered Autonomous Driving Simulation}

Through autonomous driving simulations, AVs can synthesize additional virtual driving experience for enhancing the inference and generalization capabilities of AI algorithms~\cite{chao2020survey}. The synthesized driving experiences and sensor data need to include not only rare conditions but also realistic observations that are similar to the scenes in the real world. Therefore, generative AI, such as generative adversarial networks (GANs) and diffusion models, is the promising solution to synthesizing new virtual driving data based on the existing realistic driving experience in the AVs' DTs. For instance, Kim \textit{et al.} propose the controllable simulation platform, named DriveGAN~\cite{kim2021drivegan}. Trained on 160 hours of real-world driving datasets, the DriveGAN can generate high-resolution and diverse simulations based on user-defined conditions, e.g., weather conditions and locations of simulation objects. Nevertheless, there is a gap between virtual and physical driving experiences and sensor data for AVs that are trained in simulation platforms~\cite{stocco2022mind}. 
Therefore, Yang \textit{et al.} in~\cite{yang2020surfelgan} propose a realistic sensor data synthesizing framework, named Surfelgan, for autonomous driving. The proposed data-driven data camera generation scheme demonstrates that the generated data can not only be visualized as high-quality data but also can be utilized as training datasets to improve the performance of AI algorithms in AVs. Furthermore, Zhong \textit{et al.}~\cite{zhong2022guided}  propose a generative diffusion model to synthesize controllable and realistic traffic simulations in autonomous driving systems. However, these simulation platforms cannot synthesize controllable and realistic driving experiences and sensor data. Therefore, they can merely synthesize and then label traffic and driving simulations for utilization in autonomous driving, rather than directly synthesizing labeled datasets based on specific requirements and conditions.


\subsection{Incentive Mechanisms in Connected Vehicular Networks}
With the goal of improving resource utilization in connected vehicular networks, incentive mechanisms are being developed to encourage RSUs to provide resources to vehicles~\cite{khan2022digital, khan2022digital2}. For example, Sun \textit{et al.} propose a preference-based incentive mechanism for resource allocation and scheduling in dynamic DT-assisted vehicular networks. In detail, the Stackelberg game is leveraged to formulate the interaction between leaders, i.e., vehicles, and followers, i.e., RSUs. Meanwhile, the Stackelberg game-based mechanisms can also be developed to encourage vehicles to participate in blockchain transactions and improve their utility~\cite{xing2022uavs}. In addition, storage resources in vehicular networks are valuable as the limited caching size and high deployment cost. In this regard, Xing~\textit{et al.} in~\cite{xing2022secure} propose a coalition formation game-based mechanism to motivate storage resource sharing in vehicular networks. Under this mechanism, vehicles can establish coalitions based on their routes for maximizing their utilities. In addition to storage resources, computing and communication resource sharing are considered in~\cite{hui2022collaboration}. Hui \textit{et al.} propose a collaboration-as-a-service framework in DT-enabled connected autonomous driving systems, where an auction game-based mechanism is proposed to obtain the Nash stable collaboration structure.  

However, previous works only focus on optimizing resource allocation in the physical world or building simulation platforms in the virtual world, while ignoring the potential of the synergistic effect between them. In this work, we propose a DT-assisted autonomous driving architecture empowered by generative diffusion models which can synthesize diverse and conditioned data for traffic and driving simulations. In addition, generated diffusion models are adopted to enhance the simulation capability for cross-modal traffic and driving data synthesizing.  Finally, we propose a multi-task enhanced second-score auction-based mechanism to provide fine-grained incentives for resources provided by RSUs.

\section{System Model}\label{sec:system}

In this section, we first give an overall description of the proposed system architecture consisting of connected AVs, RSUs, and virtual simulators for autonomous driving systems in Subsection~\ref{sec:architecture}. Then, we introduce the system model in this simulation system, including the network model in Subsection~\ref{sec:network}, the DT task model in Subsection~\ref{sec:dttask}, and the generative AI-empowered traffic and driving simulation models in Subsection~\ref{sec:simulation}. Finally, we formulate the incentive problems in Subsection~\ref{sec:problem} for RSUs with regards to the social surplus obtained by the provisioned resources to collaborate with autonomous driving systems.

\subsection{The Architecture of DT-assisted Autonomous Driving} \label{sec:architecture}

To enable autonomous driving in the vehicular MR Metaverse, connected AVs, RSUs, and virtual simulators can work together to maintain digital simulation platforms to share data and make AI-driven driving decisions. In this architecture, RSUs with sufficient communication and computing resources can provide online and offline simulation services for AVs and virtual simulators. Through the communication and computing resources of RSUs, AVs can maintain their digital representations in the virtual space. Specifically, AVs continuously generate digital representations during their traveling, which are offloaded to RSUs for remote executions within required deadlines~\cite{chen2020massive, wu2020massive}. During the execution of DTs, online simulations can be performed to improve the performance of decision-making modules in making driving decisions. As illustrated in Fig.~\ref{fig:full}, virtual simulators can use the available resources and time of RSUs to improve training modules of AVs via offline simulations. During the offline simulations, virtual driving environments are adopted to let AVs collect training data. Empowered by generative AI models, massive and conditioned traffic and driving can be synthesized and collected in the datasets for AVs' decision-making training modules. Therefore, more high-quality and diverse driving experiences can be leveraged by virtual simulators to train AVs. Finally, the simulation results are sent back to AVs for future utilization. In this architecture, online decision-making and offline training via traffic and driving simulations can improve driving safety and traffic efficiency in autonomous driving systems.
\begin{figure}
    \centering
    \includegraphics[width=1\linewidth]{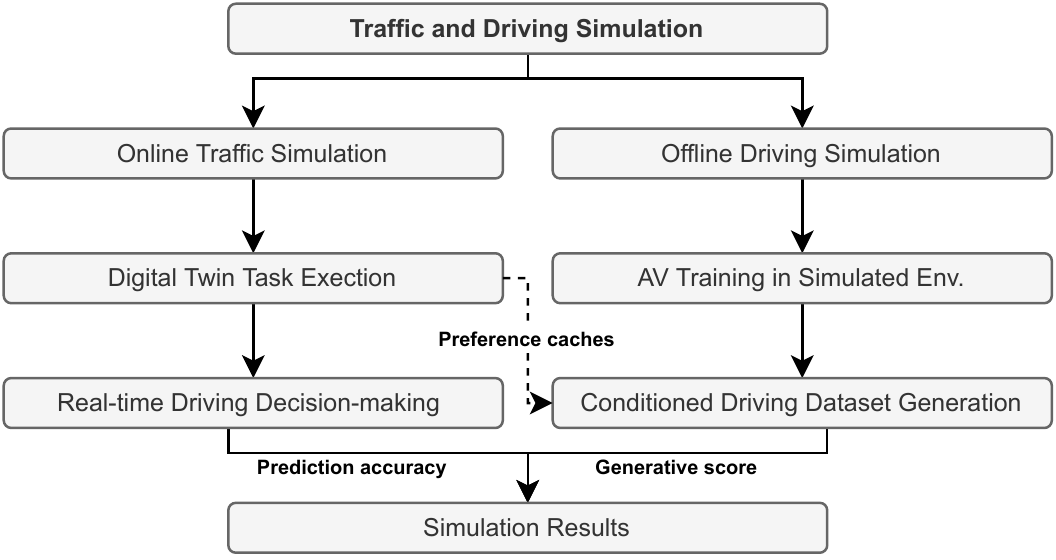}
    \caption{The workflow of DT-assisted autonomous driving simulation platforms empowered by generative AI.}
    \label{fig:full}
\end{figure}

In the system model, we consider three main roles in the vehicular MR Metaverse, i.e., AVs, RSUs, and virtual simulators. The set of $I$ AVs is represented by the set $\mathcal{I}=\{1,\dots, i,\dots, I\}$, the set of $J$ RSUs is represented as $\mathcal{J} = \{1,\dots,j,\dots,J\}$, and the set of $K$ virtual simulators is represented as $\mathcal{K}=\{0, 1,\dots, k, \dots, K\}$. We consider the RSUs to own adequate communication and computing resources for enabling autonomous driving systems, i.e., the resources of RSUs are enough for executing all the computation tasks of AVs within deadlines. To facilitate autonomous driving systems, both uplink and downlink communication channels are allocated to upload DT tasks and stream simulation results. Therefore, communication resources at RSU $j$ consist of uplink bandwidth $B_j^u$ and downlink bandwidth $B_j^d$. Moreover, to provide services such as executing DT tasks and simulating virtual traffic and driving, each RSU $j$ is equipped with computing resources, including the CPU frequency $f^{C}_j$ and the GPU frequency $f^{G}_j$. 

In autonomous driving, AVs maintain the DTs in the virtual space and continuously update the DTs by executing DT tasks, e.g., simulation, decision-making, and monitoring. These DT tasks require heterogeneous resources with various deadlines. Therefore, in the system model, $N_i$ DT tasks can be generated by AV $i$, which can be represented as $DT_i = (\textless s_{i,1}^\emph{DT}, e_{i,1}^\emph{DT}, d_{i,1} \textgreater, \ldots, \textless s_{i,n}^\emph{DT}, e_{i,n}^\emph{DT}, d_{i,n} \textgreater,$ $\ldots, \textless s_{i, N+i}^\emph{DT}, e_{i, N_i}^\emph{DT}, d_{i, N_i} \textgreater)$, where $s_{i, n}^\emph{DT}$ is the size of DT data, $e_{i, n}^\emph{DT}$ represents the number of CPU cycles required per unit data, and $d_{i, n}$ denotes the deadline for completing the task. As part of the DT data from AVs, there are preference caches that store passenger preferences, interests, and behaviors. This information is used to personalize the user's experience with DT and provide them with relevant and targeted content, services, and advertising. The size of preference caches of AV $i$ within the $DT_i$ is $C_i$. Each AV $i\in\mathcal{I}$ has its private value $v_i$ for executing its DT task $DT_i$, drawn from the probability distributions. The values of DT tasks can be interpreted as the characteristics of the AVs, such as the level of urgency to align with DT models~\cite{hui2022collaboration}, which may vary for each AV during its travel. 

We consider two types of virtual simulators in the vehicular MR Metaverse, i.e., driving virtual simulators and traffic virtual simulators. Traffic virtual simulators $1,\dots, K$ provide online traffic simulation designed to assist real-time decision-making of AVs, including driving mode selection, information fusion, and motion planning~\cite{xiao2022perception}. Online decision-making module via traffic simulation uses data collected from AVs, RSUs, and virtual simulators to create a simulated driving environment for testing and validating driving decisions and improving the safety and reliability of autonomous driving systems. As a result, the feedback of these driving decisions is immediately returned and perceived by the AVs. The driving virtual simulator $0$ delivers offline driving simulation to provide training simulation platforms to AVs. In the offline training module, the AI algorithms of AVs are trained by simulated driving practice. However, the performance improvement of AVs in the future still needs to be tested and validated, and thus the AVs cannot perceive immediate returns. The value of simulations for each simulation pair of AV $i$ and virtual simulator $k$ is $U_{i,k}$, which is the product of the common value $v_{i}$ of AV $i$ and the match quality $m_{i,k}$, i.e., $U_{i,k} = v_{i}m_{i,k}$. The common values for every virtual simulator $k$ are gained from the provisioning of traffic simulation for the  AV $i$, which can be represented by the AV $i$'s private value $v_i$~\cite{arnosti2016adverse}. Additionally, the amount of personalized information determines the match quality $m_{i,k}$ of virtual simulator $k$. This way, the values of AVs and virtual simulators in autonomous driving systems are positively correlated. Finally, let $U_{\iota,(l)}$ and $m_{\iota,(l)}$ represent the $l$ highest value and match quality for the  AV $\iota$, respectively.

\begin{figure}
    \centering
    \includegraphics[width=1\linewidth]{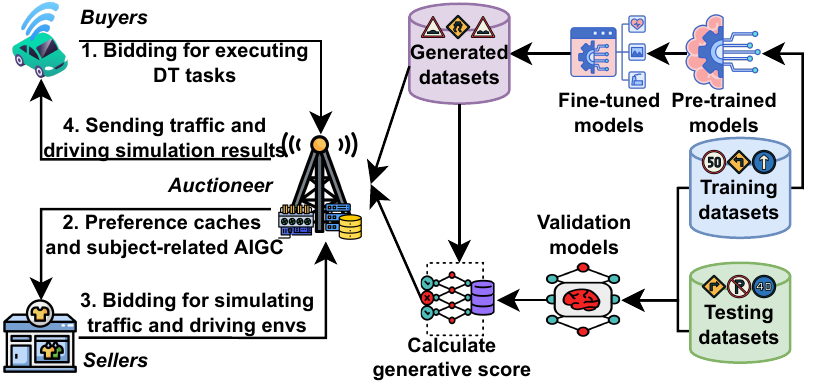}
    \caption{The workflow of the proposed  market}
    \label{fig:workflow}
\end{figure}

\subsection{Network Model}\label{sec:network}

In autonomous driving systems, cooperative vehicular networks are utilized for updating DTs and streaming simulation results~\cite{zhang2021energy, gao2022energy}, respectively. The channel gain between AV $i$ and RSU $j$ is represented by $g_{i,j}$, and the downlink transmission rate can be calculated as $R^{d}_{i,j} = B_j^d \log(1+\frac{g{i,j}P_j}{\sigma_i^2})$, where $\sigma^2_i$ is the additive white Gaussian noise at AV $i$. Additionally, the transmit power of AV $i$ is represented by $p_i$, and the uplink transmission rate can be calculated as $R^{u}_{i,j} = B_j^u \log(1+\frac{g{i,j}p_i}{\sigma_j^2})$, where $\sigma^2_j$ is the additive white Gaussian noise at RSU $j$.

\subsection{Multi-task Digital Twin Model}\label{sec:dttask}

In the multi-task digital twin model, the AVs update the DTs in virtual space by performing multiple DT tasks for different driving functions with heterogeneous levels of complexity and level of urgency.

\subsubsection{DT Task Execution}
To maintain the digital representation with the vehicular MR Metaverse, physical entities, i.e., AVs, generate and offload DT tasks to RSUs for remote execution. Therefore, we consider the demands as tasks that are required to be accomplished by RSUs.  The transmission latency $t_{i,n,j}^\emph{DT}$ for AV $i$ to upload its DT task $\textless s_{i,n}^\emph{DT}, e_{i,n}^\emph{DT}, d_{i,n} \textgreater$ to RSU $j$ can be calculated as~\cite{hui2022collaboration} 
$
      t_{i,n,j}^\emph{DT} = \frac{s_{i, n}^\emph{DT}}{R_{i,j}^{u}},
$
  where $R_{i,j}^{u}$ is the downlink transmission rate between AV $i$ and RSU $j$. 
  After completing the upload of the DT task, RSU $j$ uses its computing resources $f^{C}_{j}$ to execute the received DT tasks. The computation latency in processing the DT task $DT_i$ of AV $i$ for RSU $j$ can be calculated as 
$
      l_{i,n,j}^\emph{DT} = \frac{s_{i, n}^\emph{DT}e_{i, n}^\emph{DT}}{f^{C}_{j}}.
$
In the proposed system, without loss of generality, we consider that each RSU can allocate the virtual machines that have the capability to accomplish DT tasks within its required deadline~\cite{hui2022collaboration}, i.e., $T^{DT}_{i,n,j} =  t_{i,n,j}^\emph{DT} + l_{i,n,j}^\emph{DT} \leq d_{i, n}, \forall i\in \mathcal{I}, j\in\mathcal{J}, n=1, \ldots, N$. In addition, virtual simulators can provide traffic and driving simulation services to AVs with the remaining available communication and computing resources of RSUs. After traffic and driving simulation, virtual simulators send the simulation results to AVs for further utilization.
  

\subsection{Traffic and Driving Simulation Model}\label{sec:simulation}
  
\subsubsection{Generative AI-empowered Simulation}
As depicted in Fig. \ref{fig:workflow}, the generative AI-based traffic and driving simulations comprise training, fine-tuning, and inference stages. In the first step, the low-resolution text-image model is fine-tuned using input images paired with a text prompt containing a unique identifier and the class name of the subject~\cite{ruiz2022dreambooth}. A class-specific prior preservation loss is built in to take advantage of the semantic prior the model has over the class and make it create different instances belonging to the subject's class. In the second step, the super-resolution components of the model are fine-tuned using low and high-resolution image pairs from the input images. In this way, the model can maintain high accuracy on small details of the subject while creating different instances of the subject in different scenarios. Therefore, this process of fine-tuning a text-image diffusion model using data from a virtual simulator enables the creation of more accurate and diverse driving simulations, helping to improve the development and testing of AVs. In detail, virtual simulators adopt the Prior-Preservation Loss proposed in~\cite{ruiz2022dreambooth} to fine-tune the pre-trained models for customization of the local traffic signs.
        
\subsubsection{TSDreambooth}
During the fine-tuning process of generative AI, virtual simulators use their original simulation datasets as training inputs for the generative models. By utilizing the knowledge of the driving simulation, such as the class of traffic signs, the fine-tuned generative AI model for vehicular networks can effectively extract the features of these traffic signs. In this context, we propose the TSDreamBooth, fine-tuned on the traffic sign datasets. In the driving simulation, virtual simulators can use TSDreamBooth to generate a large amount of synthetic driving data based on local traffic signs using user preferences in AVs as input. In detail, RSUs extract preferences from DTs of AVs, known as preference caches. The preferences of AVs are collected by leveraging some user analysis equipment, such as eye-tracking devices. These preferences are then input into generative AI models as text prompts to produce diverse and conditioned simulation results. This enables virtual simulators to generate unlimited AV training experiments based on AV requirements similar to that are collected from realistic environments. As a result, the number of driving experiences for offline training is no longer limited to the hit preference caches \cite{xu2022epvisa}. However, due to the limitations of generative AI models, some simulated driving experiences may not meet expectations and can be identified by trained validation models.

The models generated are based on probability distributions, and thus the results produced by TSDreamBooth are not deterministic. The results of TSDreamBooth may not be the same every time that the model is run. Virtual simulators have the potential to capture variability in results and provide a better understanding of the uncertainties in the model's predictions by interrogating multiple results. Therefore, the validation models indicate the quality of generative AI models with generative score $G_{i,j,k}\in [0, 1]$, as demonstrated in Fig.~\ref{fig:workflow}. For each simulation result of virtual simulator $k$, the simulation task can be represented by $SIM_k = \textless s_k^\emph{SIM}, e_k^\emph{SIM} \textgreater$~\cite{ren2020edge}, where $s_k^\emph{SIM}$ is the data size of each simulation and $e_k^\emph{SIM}$ is the required GPU cycles per unit data for offline simulation. Therefore, given the total number of virtual simulators $K+1$, the match quality $m_{i,k}$ and hit preference caches $h_{i,k}$ are drawn independently from a set of distributions $m_{i,k} = h_{i,k} \sim F_{i,k}$. To explain further, given the  AV $\iota$, the traffic virtual simulators $k = 1,\dots, K$ can measure the match qualities $m_{\iota, k}$ of their traffic simulation. However, the driving virtual simulator $0$ that provides driving simulation to the  AV $\iota$ cannot immediately measure its match quality $m_{\iota, 0}$. Therefore, asymmetric information exists among virtual simulators that might result in adverse selection~\cite{arnosti2016adverse}.

Empowered by generative AI models, the match quality $m_{i,k}$ is no longer limited by the hit preference caches $h_{i,k}$. As generative AI can generate countless and diverse simulation results based on user preferences and location datasets, virtual simulators can utilize more computing resources and downlink transmission resources during offline training. During the remanding time of DT execution, the total number of simulations $Q_{i,n,j,k}$ can be calculated $Q_{i,n,j,k} = (d_{i,n}-T_{i,n,j}^\emph{DT})R_{i,j}^\emph{SIM}/s_k^\emph{SIM}$ for task $n$ in $DT_i$ of AV $i$ and its RSU $j$. Then, the marginal generative AI-empowered match quality of AV $i$ in simulator $k$ via RSU $j$ can be measured as
\begin{equation}
    m_{i,n,j, k} =\frac{\log_2 (1+G_{i,j,k} Q_{i,n,j,k}) h_{i,k}}{\theta(h_{i,k})},
\end{equation}
where $\theta(h_{i,k})$ is relative accuracy among the original model $w_{i}$ and the fine-tuned model $w_{i,k}$ for strongly convex objective~\cite{tran2019federated, zhang2021optimizing}. In detail, $\theta(\cdot) = 1$ indicates no improvement for training in simulation platforms, and $\theta(\cdot) = 0$ indicates the AI model is trained to be trained optimally.

\subsubsection{Simulation and Offline Training Model}

The effective transmission latency in simulation and transmitting the driving simulation $SIM_k$ to AV $i$ for task $n$ from RSU $j$ can be calculated as 
		\begin{equation}
			t_{i,j,k}^\emph{SIM} = \frac{Q_{i,n,j,k}s_k^\emph{SIM}}{R_{i,j}^{d}},\label{eq:arcomm}
		\end{equation}
    where $R_{i,j}^{d}$ is the downlink transmission rate between AV $i$ and RSU $j$. 
Moreover, the effective computation latency in simulation the driving simulation $SIM_k$ can be calculated as
		\begin{equation}
			l_{i,j,k}^\emph{SIM} = \frac{Q_{i,n,j,k}s_k^\emph{SIM}e_k^\emph{SIM}}{f^G_{j}},\label{eq:arcomp}
		\end{equation}
		which depends on the simulation latency in GPUs of RSU $j$. Eqs.~(\ref{eq:arcomm}) and~(\ref{eq:arcomp}) imply that the offline simulations in generative AI-empowered vehicular MR Metaverse can improve the utilization of communication and computing resources in autonomous driving systems.
		
In autonomous driving systems, RSUs can use their available computation and communication resources to provide real-time physical-virtual  services for AVs and virtual simulators. However, the total  latency cannot exceed the required deadline of AV $i$. Let $g_{i,j}^\emph{DT}$ be the allocation variable that AV $i$ is allocated to RSU $j$ and $g_{i,j,k}^\emph{SIM}$ be the allocation variable that virtual simulator $k$ is allocated by RSU $j$ to match AV $i$. The total latency $T^\emph{total}_{i,j,k}$ required by RSU $j$ to process both the DT task of AV $i$ and the simulations of virtual simulator $k$ should be less than the required deadline, which can be expressed as
\begin{equation}
  \begin{aligned}
      T^\emph{total}_{i,n,j,k} = g_{i,j}^\emph{DT}&\cdot(t_{i,n,j}^\emph{DT} + l_{i,n,j}^\emph{DT}) \\&+ g_{i,j,k}^\emph{SIM}\cdot(t_{i,n,j,k}^\emph{SIM} +l_{i,n,j,k}^\emph{SIM}) \leq d_{i, n},
  \end{aligned}
		\end{equation}
$ \forall i \in \mathcal{I}, j\in\mathcal{J}, k\in\mathcal{K}, n=1,\ldots,N$. 
The driving simulation of virtual simulator $k$ is running in the background of AV $i$ during the processing of DT tasks at RSU $j$, and thus the expected duration of offline training can also be represented by $T^\emph{total}_{i,n,j,k}$. 
  
\subsection{Problem Formulation}\label{sec:problem}


In the proposed system, a resource market, consisting of the online and offline submarkets, is established to incentivize RSUs to provide communication and computing resources for traffic and driving simulation for AVs and virtual simulators. Here, we consider participants in the market to be risk neutral, and their surpluses are correlated positively. Therefore, the mechanism is expected to map the DT values $\mathbf{v} = (v_1, \ldots, v_I)$ and simulation values $\mathbf{U}=(I_{1,0}, \ldots, U_{I,K})$ to the payments of AVs $\mathbf{p}^\emph{DT}=(p_1^\emph{DT}, \ldots, p_I^\emph{DT})$ and the payments of virtual simulators $\mathbf{p}^\emph{SIM}=(p_1^\emph{SIM}, \ldots, p_K^\emph{SIM})$ with the allocation probabilities $\mathbf{g}^\emph{DT} = (g^\emph{DT}_1,\dots, g^\emph{DT}_I)$ and $\mathbf{g}^\emph{SIM} = (g^\emph{SIM}_0, \dots, g^\emph{SIM}_K)$. By accomplishing DT tasks, the total expected surplus for RSUs from AV $i\in \mathcal{I}$ in the online submarket can be represented by $S^\emph{DT}(\mathbf{g}^\emph{DT}) = \mathbb{E}\left[\sum_{i=1}^{I} \mathcal{R}_{i,j} v_i \mathbf{g}^\emph{DT}_{i,j}(\mathbf{v})\right]$. Based on the optimal reaction to the dominant strategies of the traffic virtual simulators, the driving virtual simulator can motivate RSU with the expected surplus of $S_\emph{D}^\emph{SIM} = \mathbb{E}[U_{i,0}g^\emph{SIM}_{i,n,j, 0}(Q_i)]$. In addition, the total expected surplus provided by traffic virtual simulators is defined by  $S_\emph{T}^\emph{SIM}(\mathbf{g}^\emph{SIM}) = \mathbb{E}[\sum_{k=1}^{K}U_{i,k}g^\emph{SIM}_{i,n,j,k}(U_i)]$. In addition, we consider a cost-per-time payment model for simulation platforms, i.e., users pay a fee for each unit of time they use the simulation platform, e.g., per minute or per hour. Since AVs can only access the platforms provided by virtual simulators for a limited time $T$ while driving, the cost-per-time payment model is a viable and flexible solution for renting virtual simulation platforms. In conclusion, the social surplus that RSU $j$ can gain from the offline submarket can be defined as $S^\emph{SIM}(\mathbf{g}^\emph{SIM}) = T\cdot(\gamma S_\emph{D}^\emph{SIM}(\mathbf{g}^\emph{SIM}) + S_\emph{P}^\emph{SIM}(\mathbf{g}^\emph{SIM}))$, where $\gamma$ denotes the relative bargaining power of driving virtual simulator $0$.

To maximize the social surplus in the  market, the non-cooperative game among AVs, virtual simulators, and RSUs in the  mechanism $\mathcal{M}=(\mathbf{g}^{DT}, \mathbf{g}^{SIM}, \mathbf{p}^{DT}, \mathbf{p}^{SIM})$ can be formulated as
\begin{maxi!}|s|[2]<b>
    {\mathcal{M}}{S^\emph{DT}+ \sum_{n=1}^{N}T^\emph{total}_{i,n,j,k} \cdot \big(\gamma S_\emph{D}^\emph{SIM}+S_\emph{T}^\emph{SIM}\big)\label{eq:obj}}{}{}
    \addConstraint{T^\emph{total}_{i,n,j,k}}{\leq d_{i,n}\label{eq:con1}}
    \addConstraint{h_{i,k}}{\leq C_i\label{eq:con2}}
    \addConstraint{0\leq b^\emph{DT}_{i}}{\leq v^\emph{DT}_{i}\label{eq:con3}}
    \addConstraint{0\leq p^\emph{SIM}_{k}}{\leq U^\emph{SIM}_{\iota,k}\label{eq:con4}}
    \addConstraint{\sum_{i=1}^{I}g_{i,j}^\emph{DT}}{\leq 1\label{eq:con5}}
    \addConstraint{\sum_{k=0}^{K}g_{i,j,k}^\emph{SIM}}{\leq 1\label{eq:con6}}
    \addConstraint{g_{i,j}^\emph{DT}, g_{i,j,k}^\emph{SIM}}{\in \{0,1\},\label{eq:con7}}
\end{maxi!}
$\forall i\in \mathcal{I}, j\in\mathcal{J}, k\in\mathcal{K}, n=1,\ldots,N$. Constraint (\ref{eq:con1}) ensures the reliability of each DT task that can be accomplished within the required deadline. Constraint (\ref{eq:con2}) guarantees that the number of hit preference caches is less than the size of preference caches. Pricing constraints~(\ref{eq:con3}) and (\ref{eq:con4}) are listed to guarantee the individual rationality (IR) of traders. Allocation constraints (\ref{eq:con5}), (\ref{eq:con6}), and (\ref{eq:con7}) guarantee that each physical or virtual entity can be assigned by one and only one RSU.

In the simulation market, there exist two issues, i.e., externalities and asymmetric information, causing adverse selection in the social surplus maximization problem formulated in Eq.~(\ref{eq:obj}). First, adverse selection, as described in~\cite{akerlof1978market} refers to a market situation where participants with asymmetric information are only willing to pay the average market price. This can lead to an inefficient allocation and matching outcome in the market. In the context of traffic and driving simulations for autonomous driving, the physical and virtual entities (AVs and virtual simulators) have positively correlated surpluses for services. This means that the surplus of AVs in the online submarket can impact the surplus of virtual simulators in the offline submarket by affecting the common valuation of driving simulations. This correlation introduces externalities and asymmetric information for allocating physical and virtual entities in the service market.
\begin{itemize}
    \item Externalities: The externalities are introduced to the online submarket from the offline submarket. The traffic and driving simulation results of virtual simulators have different match qualities for different physical AVs. However, during the allocation of AV in the physical market, the  virtual simulator is unknown for participants in the online submarket, which might affect the total processing latency in the online submarket. Therefore, AVs in the online submarket prefer to demand RSU to set a prefixed threshold of execution latency before allocating the AVs.
\item Asymmetric Information: There is asymmetric information among virtual simulators for their traffic and driving simulations. The traffic simulation (e.g., movement predictions) can induce immediate responses from users. In contrast, driving simulations (e.g., training the traffic sign recognition models of AVs) cannot be measured by virtual simulators immediately.
\end{itemize}
To ensure efficient allocation and pricing results, it is important to consider the potential impact of this correlation and to find ways to address asymmetric information and externalities in the market.

\section{Multi-task Enhanced Mechanism Design}\label{sec:mechanism}


To tackle the multi-task DT offloading problem in autonomous driving systems with traffic and driving simulations empowered by generative AI, we propose the multi-task enhanced second-score auction-based mechanism, named MTEPViSA, based on the EPViSA proposed in~\cite{xu2022epvisa}. Integrating the multi-dimensional auction~\cite{tang2018multi} and the enhanced second-price auction~\cite{arnosti2016adverse}, the MTEPViSA consists of four components in the mechanism, including the bidding process, the scoring rule, the allocation rules, and the pricing rules. 

The MTEPViSA allocates and prices the winning AV in the online submarket by calculating the scoring rule. Therefore, we first define the AIGC-empowered  scoring rule similar to~\cite{tang2017momd} as follows.
To address the inefficiency issue of the PViSA mechanism, we apply several advanced auction theory techniques in auction theory~\cite{che1993design, arnosti2016adverse} to enhance the auction-based  mechanism by overcoming the externalities in the online submarket and the asymmetric information in the offline submarket described in Subsection~\ref{sec:MTEPViSA}. For the online traffic simulation, AVs in the online submarket are allowed to submit their prices and preferred deadlines of DT tasks to the auctioneer. In addition, for the offline driving simulation, the auctioneer, e.g., the proxy of the RSUs, can determine the allocation rule according to the received bids from AVs with the AIGC scoring rule. Moreover, for the offline driving simulation, by adopting the price scaling factor $\alpha \geq 1$ in the offline submarket, the auctioneer can capture a significant fraction of the social surplus from both performance and brand virtual simulators. Finally, we analyze the properties of the MTEPViSA mechanism in Subsection~\ref{sec:MTEPViSAproperty}.

\subsection{Designing the MTEPViSA Mechanism}\label{sec:MTEPViSA}
This subsection describes the workflow and property analysis of the multi-task enhanced second-score auction-based mechanism.
To begin with, the definition of the multi-task DT scoring rule that is similar to~\cite{che1993design} is provided as follows.
		
\begin{definition}[Multi-task DT Scoring Rule] 
Let $b_1^\emph{DT}$ be any offered bidding price of AV $i$, the multi-task DT scoring rule $\Phi(b_i^\emph{DT},\mathbf{d}_{i})$ under deadline requirement $\mathbf{d}_{i}$ for each task $n=1,\ldots, N_i$ is defined as
\begin{equation}
    \Phi(b_i^\emph{DT}, \mathbf{d}_{i}) = b_i^\emph{DT} + \sum_{n=1}^{N_1}\phi(d_{i,n}),\label{eq:score}
\end{equation}
where $\mathbf{d}_{i} = (d_{i,1},\ldots,d_{i,N_i})$ contains the submitted deadlines of AV $i$'s DT tasks and $\phi(\cdot)$ is a non-decreasing function and $\phi(0)=0$.
\end{definition}

The scoring rule defined in Eq. (\ref{eq:score}) involves the deadlines of DT tasks and one element in the price vector. Therefore, for each AV $i$, $N$ scores are calculated. Based on these scores, the marginal score sequence $\chi_i = \{\chi_{i,1}, \chi_{i,2}, \ldots, \chi_{i,N_i}\}$ can be calculated for each AV $i$. The marginal score indicates the AV $i$'s score increase when the total number of executed tasks increases. In addition, the $n$ marginal score of AV $i$ can be defined as
\begin{equation}
    \chi_{i,n} = \begin{cases}
 \Phi(b_i^\emph{DT}, \mathbf{d}_{i,1}) &  n = 1,\\ 
 \Phi(b_i^\emph{DT}, \mathbf{d}_{i,n}) - \Phi(b_{i}^\emph{DT}, \mathbf{d}_{i, n-1})& 2\leq n \leq N_i,
\end{cases}
\end{equation}
where $\mathbf{d}_{i,n} = (d_{i,1},\ldots,d_{i,n})$.
Then, we have the assumption on the property of marginal scores as follows~\cite{tang2018multi}.
\begin{assumption}[Marginal Score]\label{as1}
For any AV $i\in\mathcal{I}$, the marginal score sequence $\chi_i$ is non-negative and non-increasing in $n$, i.e., $\chi_{i,n} \geq \chi_{i,n+1}, n=1,2,\ldots,N_i-1$.
\end{assumption}
The meaning of Assumption \ref{as1} is that performing additional simulations provides a higher score and the score is non-increasing with the performed simulations. 

The auctioneer can calculate the scoring rule based on previous transaction results and current submitted bids and deadlines. In the online submarket, AVs submit their multi-dimensional bids $\mathbf{b}^\emph{DT} = ((b_1^\emph{DT}, \dots , b_I^\emph{DT}), \mathbf{d} = (\mathbf{d}_1, \dots, \mathbf{d}_I))$ to the auctioneer. The auctioneer computes the scores $\Phi = \Phi(b^\emph{DT}, \mathbf{d}) = (\Phi_1(b_1^\emph{DT}, \mathbf{d}_1), \dots, \Phi_I(b_I^\emph{DT}, \mathbf{d}_I))$ to the auctioneer. Then, the auctioneer determines the winning AV in the online submarket for providing online simulations services according to the calculated scores. The auctioneer allocates the trader with the highest score as the winning physical entity, as follows:
\begin{equation}
    g_i^\emph{DT}(\Phi) = 1_{\{\Phi_i>\max \{\Phi_{-i}\}\}}.
\end{equation}
In addition, the payment that the winning AV needs to pay is the bidding price of the second highest score, i.e.,
\begin{equation}
    p_i^\emph{DT}(\Phi) = g_i^\emph{DT}(\Phi) \cdot b^\emph{DT}_{\arg\max \{\Phi_{-i}\}}.
\end{equation}

In the offline submarket, virtual simulators submit their bids $b^\emph{SIM} = (b^\emph{SIM}_0, b^\emph{SIM}_1, \dots, b^\emph{SIM}_K)$ to the auctioneer. In the MTEPViSA mechanism, the price scaling factor $\alpha \geq 1$ is utilized. First, the auctioneer determines the allocation probabilities for traffic virtual simulators as $g^\emph{SIM}_{k}(b^\emph{SIM})=1_{b^\emph{SIM}_{k} > \alpha b^\emph{SIM}_{-k}}$. Then, the allocation probability of the virtual simulator is calculated as $g^\emph{SIM}_0(b^\emph{SIM})\leq 1-\sum_{k=1}^{K}g^\emph{SIM}_k(b^\emph{SIM})$. Based on the price scaling factor $\alpha$, the winning virtual simulator is required to pay 
\begin{equation}
    p^\emph{SIM}_k(b^\emph{SIM}) = g_k^\emph{SIM}(b^\emph{SIM})\cdot \rho_k^\emph{SIM},
\end{equation}
where
\begin{equation}
    \rho_k^\emph{SIM} = \begin{cases}
        T^\emph{total}_{i,n,j,0}b^\emph{SIM}_{0}, &  k=0, \\ 
        T^\emph{total}_{i,n,j,k}\alpha \max  \{b^\emph{SIM}_{-k}\}, & k=1,\dots,K.
    \end{cases}
\end{equation}
By introducing the price scaling factor in the pricing rule in the offline submarket, the MTEPViSA mechanism can increase the expected social surplus of RSUs by providing offline simulations services compared with the traditional second-price auction. We then analyze the strategy-proofness of the MTEPViSA mechanism in Theorem~\ref{theorem1}. 

These allocation and pricing rules are effective and efficient when the efficient scoring rule exists~\cite{tang2018multi} and the price scaling factor is selected as $\alpha_\iota = \max{(1,\gamma\mathbb[Q_{\iota,0}]/\mathbb{E}[Q_{\iota,(2)}])}$~\cite{arnosti2016adverse}, where $\iota$ is the winning AV in the online submarket. 
Finally, under the cost-per-time payment model of traffic and driving simulations and the efficient multi-task DT scoring rule, the MTEPViSA is fully strategy-proof and adverse-selection-free.
\subsection{Property Analysis}\label{sec:MTEPViSAproperty}

To maximize its utility, each AV $i$ can choose the deadlines that can maximize its valuation $v_i$ and externalities $\phi(d_{i,n})$ for the offline submarket. In Proposition~\ref{proposition2}, each AV can choose the optimal set of deadlines to maximize its expected payoff.
\begin{proposition} [Optimal Deadline]\label{proposition2}
    The optimal deadline bidding strategy for task $n$ of the AV $i$ is given by
    \begin{equation}
        d_{i,n}^* = \arg\max_{d\in(0,d_{i,n}]} (v_{i} + \sum_{n=1}^{N_i}\phi(d)).\label{eq:deadline}
    \end{equation}
\end{proposition}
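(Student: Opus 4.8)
The plan is to show that for each AV $i$, given that the scoring rule $\Phi(b_i^\emph{DT},\mathbf{d}_i)$ is additively separable into a price component and a deadline-externality component $\sum_{n=1}^{N_i}\phi(d_{i,n})$, the optimal deadline bid decouples from the price bid and maximizes the AV's contribution to its own realized payoff through both channels: its valuation $v_i$ (entering via the online allocation it wins) and the externality term $\phi(d)$ it generates for the offline submarket. First I would write out AV $i$'s expected payoff as a function of its reported deadline vector $\mathbf{d}_i$, holding the bid $b_i^\emph{DT}$ and all rivals' reports fixed. Under the second-score allocation rule, AV $i$ wins whenever $\Phi_i > \max\{\Phi_{-i}\}$, and by Assumption~\ref{as1} the marginal scores $\chi_{i,n}$ are non-negative and non-increasing, so raising any $\phi(d_{i,n})$ weakly increases $\Phi_i$ and hence weakly increases the winning probability. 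Since $\phi$ is non-decreasing with $\phi(0)=0$, the AV wants each $d_{i,n}$ as large as feasible within its own deadline constraint $d\in(0,d_{i,n}]$ — this already suggests the $\arg\max$ form.

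The key steps, in order, would be: (i) express the interim expected utility $u_i(\mathbf{d}_i) = g_i^\emph{DT}(\Phi)\,(v_i - p_i^\emph{DT}) + (\text{offline-submarket payoff depending on }\phi(d_{i,n}))$, invoking strategy-proofness (Theorem~\ref{theorem1}) so that truthful price bidding $b_i^\emph{DT}=v_i^\emph{DT}$ is already optimal and the deadline choice is the only remaining free variable; (ii) observe that because the scoring rule is separable, the deadline enters $u_i$ only monotonically through $\phi$ (via the allocation indicator and via the externality captured in the offline submarket), so the maximization over $\mathbf{d}_i$ reduces, coordinate by coordinate, to maximizing $v_i + \sum_{n=1}^{N_i}\phi(d)$ over the feasible set; (iii) conclude that the maximizer is exactly $d_{i,n}^* = \arg\max_{d\in(0,d_{i,n}]}(v_i + \sum_{n=1}^{N_i}\phi(d))$, which coincides with pushing $d$ to the boundary when $\phi$ is strictly increasing and is well-defined in general by compactness of the feasible interval's closure together with monotonicity of $\phi$.

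The main obstacle I expect is handling the coupling between the online and offline submarkets cleanly: the AV's deadline report simultaneously affects (a) whether it wins the online DT task, (b) the price it pays there, and (c) the match quality / externality it realizes in the offline driving simulation via $m_{i,n,j,k}$ and $T^\emph{total}_{i,n,j,k}$. I would need to argue that none of these channels reverses the monotone dependence on $\phi$ — in particular that the second-score payment $p_i^\emph{DT}$ does not itself depend on AV $i$'s own deadline report (it depends only on $b_{\arg\max\{\Phi_{-i}\}}^\emph{DT}$), so increasing $\phi(d_{i,n})$ never raises AV $i$'s own payment, only its win probability. Once this "no self-penalty" observation is in place, the separability of $\Phi$ plus Assumption~\ref{as1} makes the rest a routine monotonicity-of-a-separable-objective argument, and the stated formula follows immediately.
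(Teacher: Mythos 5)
Your proposal reaches the stated conclusion but by a genuinely different route from the paper. The paper's proof is the classic Che-style iso-score deviation: given any multi-dimensional bid $(\bar{b}_i^{\emph{DT}},\bar{\mathbf{d}}_i)$, it constructs a competing bid $(\hat{b}_i^{\emph{DT}},\hat{\mathbf{d}}_i)$ whose deadline component is taken from Eq.~(\ref{eq:deadline}) and whose price is adjusted so that $\Phi(\hat{b}_i^{\emph{DT}},\hat{\mathbf{d}}_i)=\Phi(\bar{b}_i^{\emph{DT}},\bar{\mathbf{d}}_i)$; the two bids then induce the same score and the same allocation, and conditional on winning the deviation yields weakly higher utility because the winner's conditional payoff reduces to $v_i+\sum_{n}\phi(d_{i,n})-\max\{\Phi_{-i}\}$, which is maximized exactly by the deadline in Eq.~(\ref{eq:deadline}). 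You instead hold the price bid fixed and argue monotonicity: the deadline report enters only through the score (increasingly, via $\phi$) and never through the winner's own payment, so each $d_{i,n}$ should be pushed to the maximizer of $\phi$ on $(0,d_{i,n}]$. What the paper's route buys is that the allocation probability is frozen along the iso-score curve, so one never needs to sign the utility conditional on winning, and deadline optimality is established for every price bid simultaneously rather than only at the truthful one.

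Two soft spots in your version. First, invoking Theorem~\ref{theorem1} to pin down the price bid is circular in the paper's logical order: the strategy-proofness argument presupposes the efficient scoring rule, which rests on Proposition~\ref{proposition2}. Your argument does not actually need it --- the monotonicity reasoning works for an arbitrary fixed $b_i^{\emph{DT}}$ --- so the appeal should be dropped. Second, and more substantively, ``higher win probability at unchanged payment'' improves expected utility only if the utility conditional on winning is non-negative; as written your proof does not verify this. It does hold when the payment takes the score-difference form $\max\{\Phi_{-i}\}-\sum_{n}\phi(d_{i,n})$ used implicitly in the paper's displayed inequality, since then the winner's payoff equals $\Phi_i-\max\{\Phi_{-i}\}>0$; alternatively, switching to the iso-score deviation sidesteps the issue entirely.
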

\begin{proof}
For any given multi-dimensional bid $(\bar{b}_i^\emph{DT}, \bar{d}_i)$ of AV $i$, there is always another bid $(\hat{b}_i^\emph{DT}, \hat{d}_i)$ that can be made to an expected utility for AV $i$. This results in an expected utility for the physical bidder $i$ that is at least as high as the utility obtained from the original bid $(\bar{b}_i^\emph{DT},\bar{\mathbf{d}}_i)$.
First, the deadline $\hat{\mathbf{d}}_i$ can be obtained from Eq. (\ref{eq:deadline}). Second, the deadline for the new bid $\hat{b}_i^\emph{DT}$ can be determined by $\Phi(\hat{b}_i^\emph{DT}, \hat{\mathbf{d}}_i) = \Phi(\bar{b}_i^\emph{DT},\bar{\mathbf{d}}_i)$, indicating that both bids result in the same score and allocation probability. If the bidder loses, their utility will be zero if they achieve this score. Moreover, if the bidder wins, the new bid $(\hat{b}_i^\emph{DT},\hat{\mathbf{d}}_i)$ will yield the utility higher than or equal to the utility obtained from submitting other bids, i.e., 
\begin{equation}
    \begin{aligned}
        v_i - \hat{b}_i^{\tmop{DT}} - &(\max \{ \Phi_{\mathcal{I} / \{ i \}} \} + \sum_{n=1}^{N_i} \phi
        (\hat{d}_{i,n})) \geq \\&v_i - \bar{b}_i^{\tmop{DT}} - (\max \{
        \Phi_{\mathcal{I} / \{ i \}} \} + \sum_{n=1}^{N_i}\phi (\bar{d} _{i,n})),
    \end{aligned}
\end{equation}
where $\Phi_{\mathcal{I} / \{ i \}}$ consists of the scores $(\Phi(b_1^\emph{DT}, \mathbf{d}_1),\ldots,\Phi(b_{i-1}^\emph{DT}, \mathbf{d}_{i-1}), \Phi(b_{i+1}^\emph{DT}, \mathbf{d}_{i+1}), \ldots, \Phi(b_I^\emph{DT}, \mathbf{d}_I) )$
This proposition holds true because the deadline is determined through the calculation of Eq. (\ref{eq:deadline}).
\end{proof}
For the optimality of the selection of quality, a similar proof of Proposition \ref{proposition2} can be found in~\cite{tang2018multi}. Based on the bids submitted by AVs and the chosen optimal deadline, the auctioneer can maintain an efficient  scoring rule, which can maximize social surplus, to guide the allocation decisions in the online submarket, as follows.
Then, the efficient multi-task DT scoring rule can be defined as follows.
\begin{definition}[Efficient Multi-task DT Scoring Rule]\label{definition2} An efficient multi-task DT scoring rule can be expressed as
\begin{equation}
        \Phi(b^\emph{DT}, \mathbf{d}^*) = b^\emph{DT} + T(\mathbf{d}^*)[\gamma S_D^\emph{SIM}(\mathcal{M}) + S_\emph{T}^\emph{SIM}(\mathcal{M})],
    \end{equation}
where $T(\mathbf{d}^*) [\gamma S_D^\emph{SIM}(\mathcal{M}) + S_\emph{T}^\emph{SIM}(\mathcal{M})]$ is the social surplus of virtual simulators by providing simulations and $T(\mathbf{d}^*)$ is the realized duration of AV training.
\end{definition}

For a mechanism, strategy-proofness indicates that participants will not get a higher utility by changing their truthful bids. A mechanism is considered strategy-proof if and only if it can be described by a critical payment function $\psi$, such that a bidder $n$ is deemed the winner if and only if their bid $b_n$ exceeds the threshold price $\psi (b_{-n})$ when compared to the other competing bids $b_{-n}$. Once bidder $n$ has won the auction, the payment charged by the auctioneer is the critical payment $\psi$.  Adverse-selection free indicates that if the existence of market externalities and asymmetric information is independent of bidders' valuations, then under this mechanism, the factors of market externalities and asymmetric information are also independent of the allocation rules of the  mechanisms. As a consequence, it should be highlighted that the MTEPViSA mechanism is fully strategy-proof and adverse-selection free, as demonstrated in the following theorem.
\begin{theorem}\label{theorem1}
    The MTEPViSA mechanism is fully strategy-proof and adverse-selection-free in the market with the efficient multi-task DT scoring rule and the cost-per-time model of simulations.
\end{theorem}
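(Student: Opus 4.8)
The plan is to split the claim into three parts and establish them in order: (i) strategy-proofness of the online submarket (the multi-dimensional DT auction), (ii) strategy-proofness of the offline submarket (the price-scaled second-price auction among virtual simulators), and (iii) adverse-selection-freeness of the coupled mechanism. Throughout I would use the critical-payment characterization recalled just before the theorem: a single-parameter allocation is strategy-proof iff it is monotone in the bid and the winner is charged the threshold price $\psi(b_{-n})$, i.e.\ the infimum of bids at which it still wins. So in each submarket it suffices to exhibit such a $\psi$ and check monotonicity, and then to argue that the two submarkets do not interfere with each other's incentives.

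For the online submarket I would first invoke Proposition~\ref{proposition2}: for any fixed competing scores, AV $i$'s payoff-maximizing deadline vector $\mathbf{d}_i^*$ is pinned down by Eq.~(\ref{eq:deadline}) and is independent of the other AVs' reports, so the only residual strategic degree of freedom is the scalar price $b_i^{\emph{DT}}$. Under the efficient multi-task DT scoring rule of Definition~\ref{definition2}, the score $\Phi_i(b_i^{\emph{DT}},\mathbf{d}_i^*)$ is strictly increasing in $b_i^{\emph{DT}}$; hence the allocation $g_i^{\emph{DT}}(\Phi)=1_{\{\Phi_i>\max\{\Phi_{-i}\}\}}$ is monotone in $b_i^{\emph{DT}}$, and the winner's payment $b^{\emph{DT}}_{\arg\max\{\Phi_{-i}\}}$ is exactly the lowest price at which $i$ keeps the top score, i.e.\ the critical payment $\psi(\Phi_{-i})$. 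The classical second-score argument (Che~\cite{che1993design}; Tang \emph{et al.}~\cite{tang2018multi}) then gives that reporting $v_i$ and $\mathbf{d}_i^*$ truthfully is a dominant strategy, which is the ``full'' part of full strategy-proofness (both the price and deadline dimensions are truthful). Assumption~\ref{as1} on non-increasing marginal scores is what guarantees the efficient scoring rule is well posed, so the induced online allocation maximizes the online-plus-offline social surplus of Eq.~(\ref{eq:obj}).

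For the offline submarket I would treat the two simulator types separately. Each traffic virtual simulator $k\in\{1,\dots,K\}$ wins iff $b_k^{\emph{SIM}}>\alpha\max\{b_{-k}^{\emph{SIM}}\}$ and, upon winning, pays $\rho_k^{\emph{SIM}}=T^{\emph{total}}_{i,n,j,k}\,\alpha\max\{b_{-k}^{\emph{SIM}}\}$, which does not depend on $b_k^{\emph{SIM}}$; this is precisely a threshold payment rescaled by $\alpha$ and by the realized training duration, so truthfulness is dominant by the usual second-price argument. The driving simulator $0$ receives the residual allocation $g_0^{\emph{SIM}}\le 1-\sum_{k=1}^{K}g_k^{\emph{SIM}}$ and pays $T^{\emph{total}}_{i,n,j,0}b_0^{\emph{SIM}}$; by the enhanced-second-price construction of Arnosti \emph{et al.}~\cite{arnosti2016adverse}, with the scaling factor fixed at $\alpha_\iota=\max\!\big(1,\gamma\mathbb{E}[Q_{\iota,0}]/\mathbb{E}[Q_{\iota,(2)}]\big)$ its expected payoff from best-responding to the traffic simulators' dominant strategies is maximized by reporting its common-value estimate truthfully. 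I would import this step essentially verbatim from~\cite{arnosti2016adverse}, verifying only that the cost-per-time payment model merely multiplies every simulator's utility by the common positive factor $T^{\emph{total}}_{i,n,j,k}$ and so leaves all incentive comparisons unchanged. Since the online payment depends only on online bids and the offline payments only on offline bids, combining the two submarkets yields full strategy-proofness of MTEPViSA.

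Finally, for adverse-selection-freeness I would show that, once valuations are fixed, neither the externality nor the asymmetric information enters the allocation rule. The externality from the offline submarket is internalized by Definition~\ref{definition2}: the term $T(\mathbf{d}^*)[\gamma S_D^{\emph{SIM}}(\mathcal{M})+S_\emph{T}^{\emph{SIM}}(\mathcal{M})]$ added to the online score makes the online winner the AV that maximizes total realized surplus, so the online allocation is efficient and reacts to the (a priori unknown) matched simulator only through its surplus contribution, not through an uncertainty premium. For the asymmetric information among simulators, I would check --- again following~\cite{arnosti2016adverse} --- that with $\alpha_\iota$ as above the probability that simulator $0$ is allocated equals the probability it would be allocated under symmetric information about match qualities, so the allocation rule is measurable with respect to the submitted bids alone and independent of whether $m_{\iota,0}$ is observed, which is exactly the definition of adverse-selection-freeness stated before the theorem. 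Assembling (i)--(iii) proves the claim. The main obstacle is step (iii) together with the coupling in step (i): I must confirm that inserting the offline surplus into the online scoring rule preserves the threshold structure needed for strategy-proofness, and that the particular constant $\alpha_\iota$ exactly cancels the adverse-selection distortion --- the latter requiring the distributional computation of expected order statistics of the simulation counts $Q_{\iota,\cdot}$ from~\cite{arnosti2016adverse}, adapted to the generative-AI-empowered match quality $m_{i,n,j,k}$.
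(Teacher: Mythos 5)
Your proposal follows essentially the same route as the paper's proof: it splits the claim into online strategy-proofness, offline strategy-proofness, and adverse-selection-freeness, exhibits the same critical payment functions (the $\max\{\Phi_{-i}\}$-based threshold in the online submarket and $\alpha\max\{b^{\emph{SIM}}_{-k}\}$ in the offline submarket), observes that the cost-per-time factor $T^{\emph{total}}_{i,n,j,k}$ rescales utilities uniformly, and then derives adverse-selection-freeness from the structure of those payments following~\cite{arnosti2016adverse}. The only differences are in emphasis: the paper additionally argues false-name-proofness in both submarkets and verifies adverse-selection-freeness concretely by checking quasi-linearity (online) and homogeneity of degree one (offline) of the critical payment functions on two-point distributions of the externality $\phi\in\{0,\infty\}$ and the common value $v\in\{1,c\}$, a verification you instead defer to the cited reference.
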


\begin{proof}
    To demonstrate that the MTEPViSA mechanism is fully strategy-proof, we must identify the critical payment functions for traders in both the online and offline submarkets to satisfy the conditions for a strategy-proof auction. To begin with, we show that there is a critical payment function $\psi^{on}(b_{-i}^\emph{DT})$ for the MTEPViSA mechanism in the online submarket. If a bidder $i$ in the online submarket submits a truthful bid, their score can be determined by the function $\Phi(b^{DT}_i, \mathbf{d}_i)$, given any deadlines $\mathbf{d}_i$. It is necessary to demonstrate that bidder $i$ cannot increase their benefit by altering their bid. If the bidder were to submit a false bid $b_i' \neq b^\emph{DT}_i$, and did not win the auction, their reward would be zero, regardless of the specified deadline $\mathbf{d}_i$ or the score calculation function $\Phi(b^{DT}_i, \mathbf{d}_i)$. However, if bidder $i$ were to win the auction by submitting a false bid, their expected reward can be represented as follows:
    \begin{equation}
    \begin{aligned}
        S^{\tmop{DT}}_i & = v_i - b_i'\\
        & = v_i - (\max \{ \Phi_{- i} \} + \sum_{n=1}^{N_i}\phi (d_{i,n}))\\
        & = \Phi_i - \max \{ \Phi_{- i} \},
    \end{aligned}
\end{equation}
where $\max{\Phi_{-i}}$ represents the highest score excluding the bid of bidder $i$. Hence, regardless of whether the bidder wins or loses under either $\Phi'$ or $\Phi_i$, the utilities that they receive will always be lower than  or equal to the utility that they would receive if they submitted the truthful bid. The critical payment function in the online submarket can be represented as $\psi_{on} (b^{\tmop{DT}}_i, \mathbf{d}_i ) = \max \{ \Phi_{-i} \} + \sum_{n=1}^{N_i}\phi (d_{i,n})$. Additionally, the auctioneer must compute the synchronization scores for bidders in the online submarket. As a result, all bidders in the online submarket are protected against false-name bidding.

The critical payment function for the MTEPViSA mechanism in the offline submarket is $\psi_{\tmop{off}} (b^{\tmop{SIM}}_{-k}) = \alpha \max \{ b^{\tmop{SIM}}_{-k} \}$, where $\alpha \geq 1$. With this critical payment function in place, the top-performing bidder can win by ensuring that $\psi_{\tmop{off}} (b^{\tmop{SIM}}_{-k}) \geq \max \{ b^{\tmop{SIM}}_{-k} \}$. Furthermore, the mechanism is proof against false-name bidding in the offline submarket if $\psi_{\tmop{off}} (b^{\tmop{SIM}}_{-k}) = \psi_{\tmop{off}} (\max \{ b^{\tmop{SIM}}_{-k} \})$. Consider a set of bids $b_{k}^{SIM}$ that result in $\psi_{\tmop{off}} (b^{\tmop{SIM}}_{-k}) \neq \psi_{\tmop{off}} (\max\{ b^{\tmop{SIM}}_{-k}\})$, and let there be two bidders in the offline submarket. If one bidder has a higher valuation than $\psi_{\tmop{off}} (b_{-k})$ and the other has a valuation of $\max \{ b^{\tmop{SIM}}_{-k} \}$, and $\psi_{\tmop{off}} (b^{\tmop{SIM}}_{-k}) < \psi_{\tmop{off}} (\max \{ b^{\tmop{SIM}}_{-k} \})$, then the first bidder could submit a lower price while keeping the other bids in the set $b_{-k}$. This means that the mechanism is not winner false-name proof. Conversely, if $\psi_{\tmop{off}} (b^{\tmop{SIM}}_{-k}) > \psi_{\tmop{off}} (\max \{ b^{\tmop{SIM}}_{-k} \})$, the losing bidder in the offline submarket could submit a higher bid compared to the winner's bid while maintaining the other bids in the set $b^{\tmop{SIM}}_{-k}$. As a result, the mechanism in the offline submarket is loser false-name proof.
  \begin{figure*}[t]
\vspace{-0.8cm}
    \centering
    \subfigure[Trajectories 1, R2 score=0.9972.]{\includegraphics[width=0.24\linewidth]{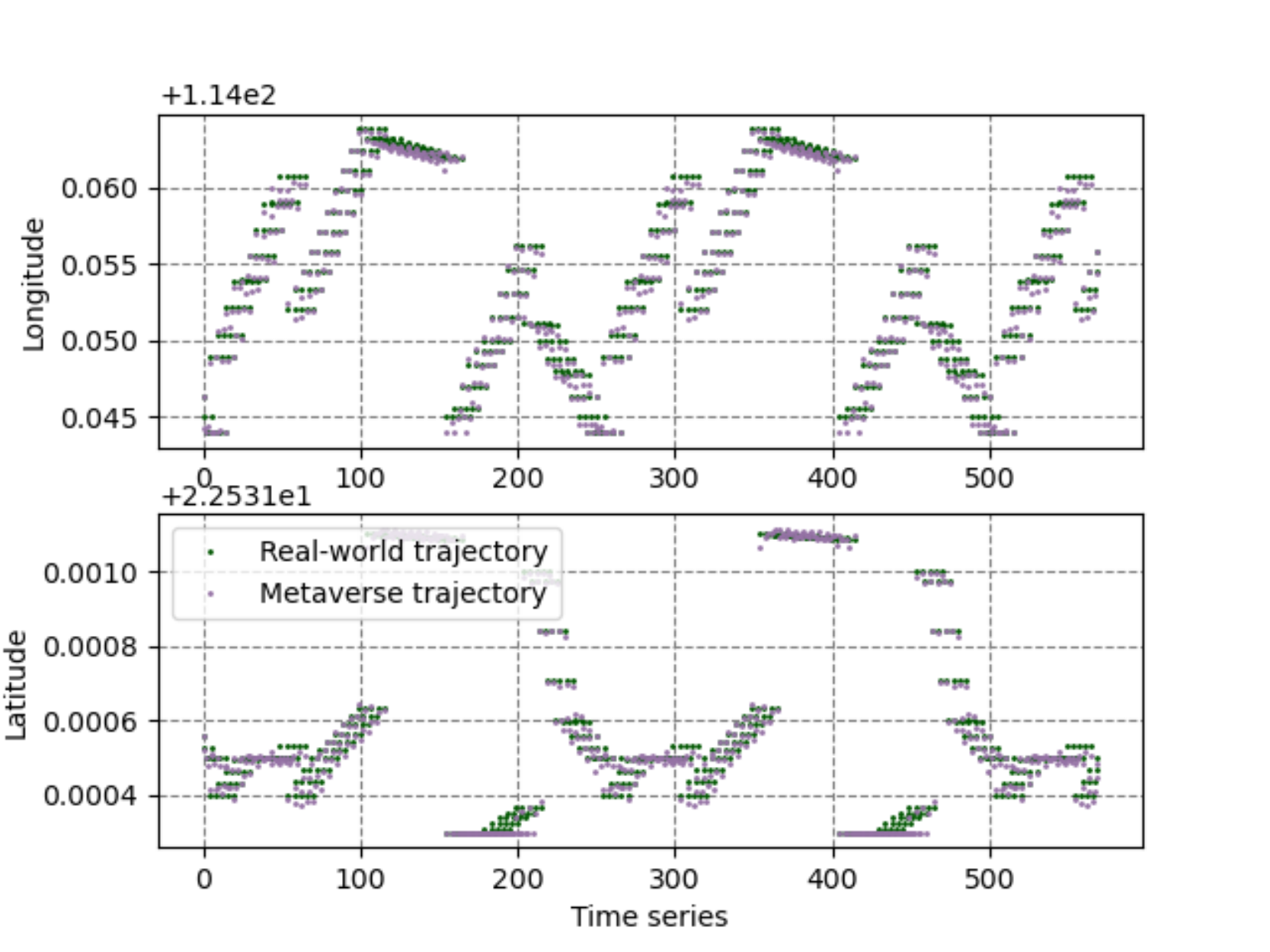}%
        \label{case1}}
    \subfigure[Trajectories 2, R2 score=0.9939.]{\includegraphics[width=0.24\linewidth]{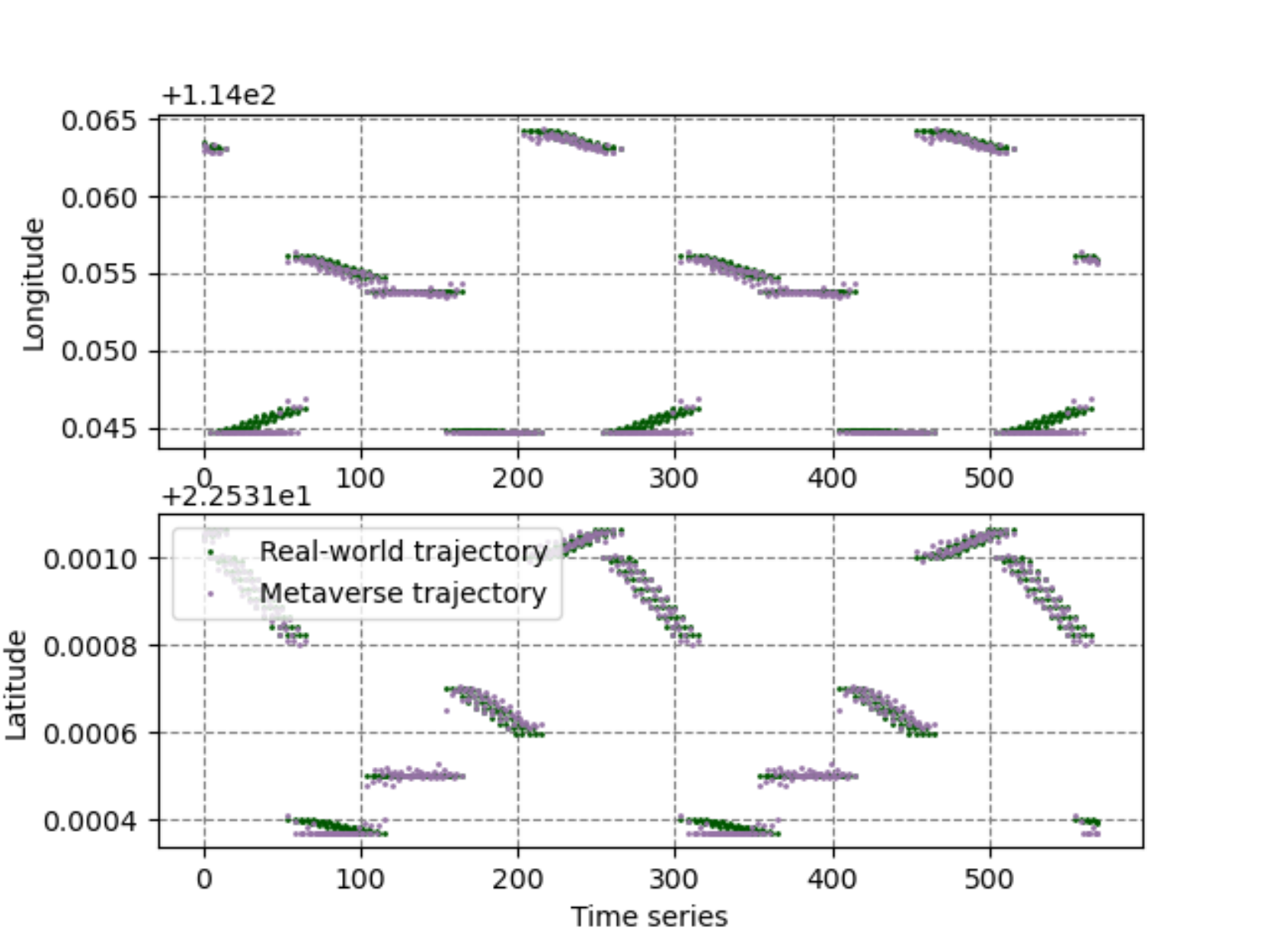}%
        \label{case2}}
    \subfigure[Trajectories 3, R2 score=0.9984.]{\includegraphics[width=0.24\linewidth]{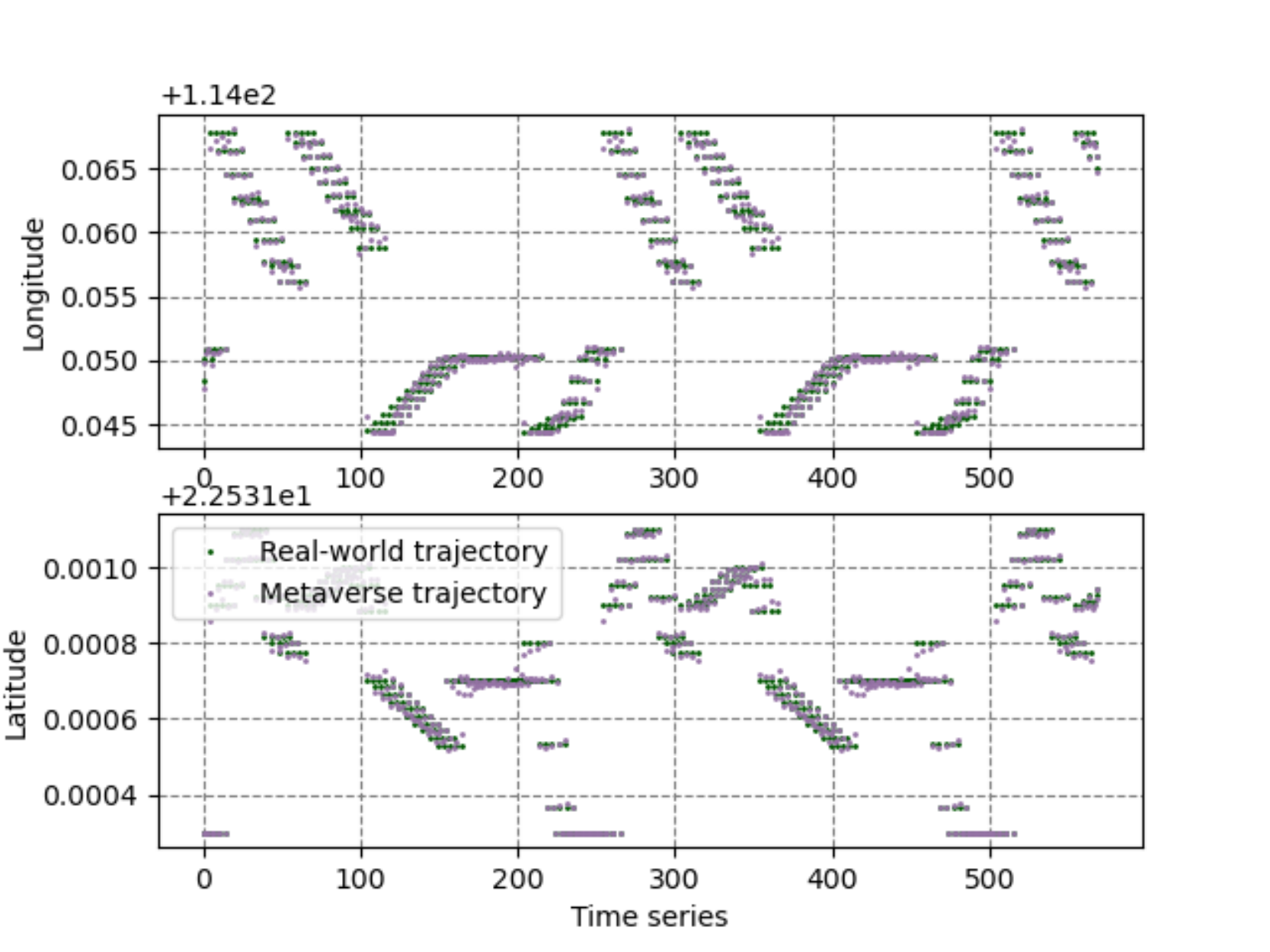}%
        \label{case3}}
    \subfigure[Trajectories 4, R2 score=0.9983.]{\includegraphics[width=0.24\linewidth]{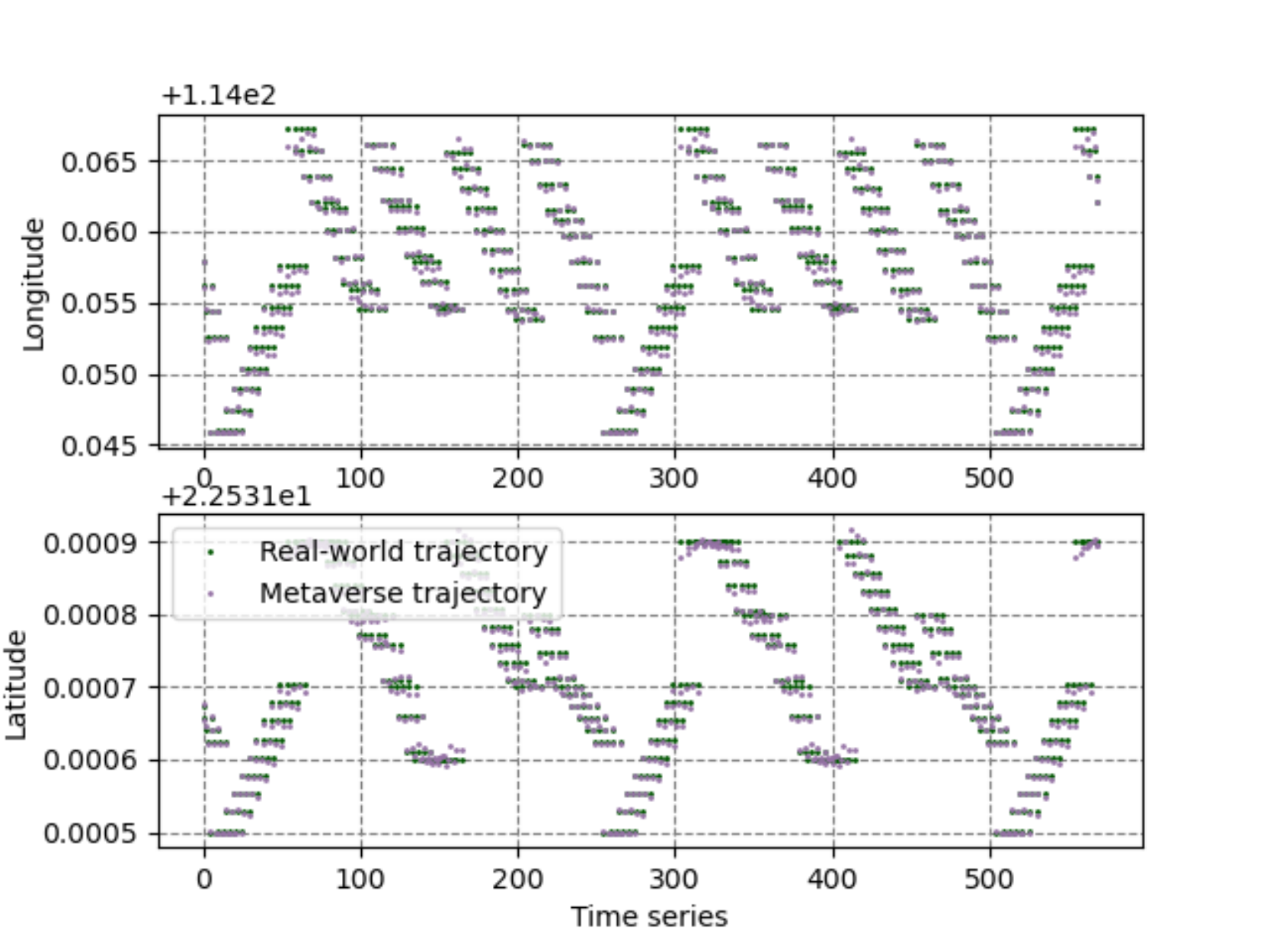}%
        \label{case4}}
    \caption{Difference between the real-world trajectories and DT-assisted predicted trajectories.}
    \label{fig:trajectory}
\end{figure*}

To show that the mechanism is free from adverse selection, the critical payment function in the online submarket is quasi-linear and the critical payment function in the offline submarket is homogeneous of degree one~\cite{arnosti2016adverse}. In the online submarket, we consider two types of external effects from the offline submarket, $d \in \{ 0, \infty \}$, with a probability of $\Pr (\phi = 0) \in (0, 1)$ while keeping the other bidding prices $v_{- i}$ constant. If $\phi = 0$, there are no external effects from the offline submarket, and we have $g_{i }^{\tmop{DT}} (v + \phi (0) ) = g_{i }^{\tmop{DT}} (v) = 1_{\{v_i > \max v_{- i}\}} = 1_{\{v_i > \psi_{\tmop{on}} (v_{- i}, 0)\}}$. If $\phi = \infty$, then $g_{i}^{\tmop{DT}} (v + \phi (0) ) = g^{\tmop{DT}}{i } (v + \infty) = 1_{\{v_i > \psi_{\tmop{on}} (v_{- i}, \phi(\infty))\}} = 0$, meaning no bidder can win in the online submarket. Thus, the proposed mechanism in the online submarket is free from adverse selection. In the offline submarket, suppose that $v \in { 1, c }$ with a probability of $\Pr (C = 1) \in (0, 1)$ while keeping the vehicular MR Metaverse simulation qualities constant. It can be shown that $g^{\tmop{SIM}}_{0 } (v m) = 1_{\{ v = c \}}$. When $v = 1$, $g^\emph{SIM}_k (v m) = g^\emph{SIM}_k (v m) = 1_{\{ m_i > \psi_{\tmop{off}} (m_{- i}) \}} = 1$, and therefore $g^{\tmop{SIM}}_0 (\tmop{cm}) = 0$. When $v = c, g^\emph{SIM}_k (\tmop{cm}) = 1_{\{ c m_i > \phi_{\tmop{off}} (c m_{- i}) \}} = 0$, meaning no top-performing bidder can win the auction, and then $g^\emph{SIM}_0 (\tmop{cm}) = 1$.
In conclusion, we have shown that the MTEPViSA mechanism is both strategy-proof for bidders in the online and offline submarkets and free from adverse selection by utilizing the efficient multi-task DT scoring rule and cost-per-time model.
\end{proof}

From Theorem~\ref{theorem1}, we can conclude that the proposed mechanism for AVs and virtual simulators is fully strategy-proof. That is, AVs and virtual simulators cannot manipulate their bids to achieve higher utility. Although we introduce the scoring rule and price scaling factors to eliminate the externalities and asymmetric information, these additional components may not provide additional information to AVs and virtual simulators when they develop their own strategy. In the MTEPViSA mechanism, the optimal strategy for AVs in the online submarket and virtual simulators in the offline submarket is to tell the truth. Moreover, due to the interaction of two submarkets leading to externalities and asymmetric information for traders, the proposed mechanism is free from adverse selection as all participants have sufficient motivation to join the market. Therefore, the social surplus achieved by the MTEPViSA mechanism is still efficient enough to avoid market failure.

Finally, we consider the implementation overhead of the proposed auction-based mechanism, where a centralized auctioneer collects bids, computes scoring rules, and determines allocation and pricing results. To begin with, $I$ AVs and $K$ virtual simulators submit their bids to the auctioneers. Let $N$ be the number of DT tasks of each AV, the computation complexity to compute the scores is $O(IK\log(K)(N))$. Then, the computation complexity to sort the scores of AVs in the online market is $O(I\log(I))$. Finally, the computation complexity of the determination and pricing of the winning virtual simulator is $O(K)$. Overall, the computation complexity of the proposed MTEPViSA is $O(IKN\log(K) + I\log(I)+ K)$.

        
  
\section{Experimental Results}\label{sec:exp}
\begin{figure}[t]
    \centering
    \includegraphics[width=1\linewidth]{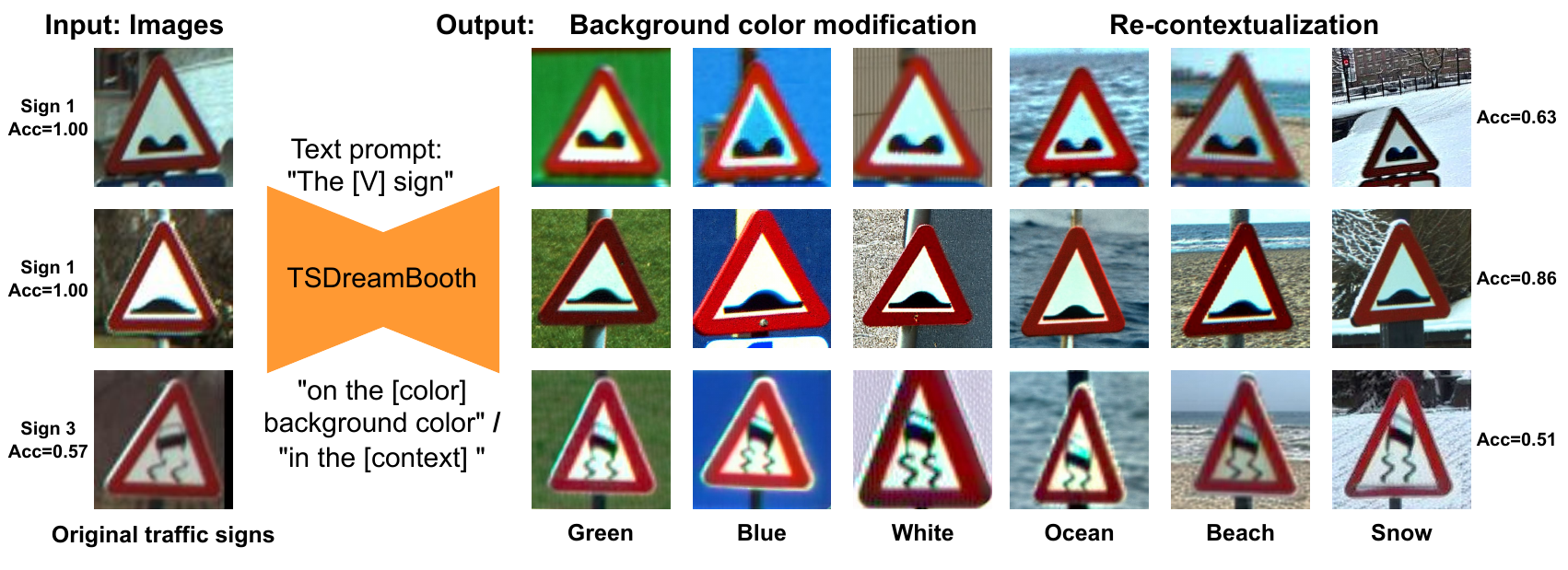}
    \caption{Synthesized traffic signs generated by TSDreambooth for background color modification and re-contextualization.}
    \label{fig:AIGC}
\end{figure}

\begin{figure*}[t]
    \centering
    \includegraphics[width=1\linewidth]{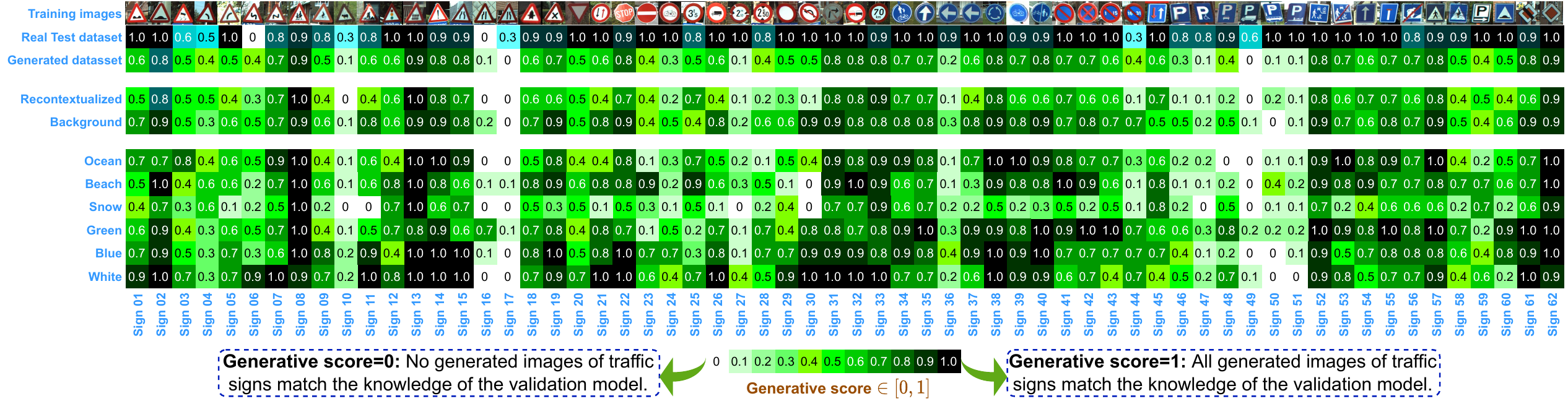}
    \caption{The generative score of the TSBreamBooth fine-tuned on the BelgiumTS dataset.}
    \label{fig:gs}
\end{figure*}

In this section, we implement the generative AI-empowered autonomous driving simulation system for the vehicular MR Metaverse and the proposed mechanism. First, we demonstrate the performance of the DT-assisted movement prediction model in Subsection~\ref{sec:expdt} and the generative AI-empowered traffic and driving simulation model in Subsection~\ref{sec:expar}. Then, we evaluate the performance of the proposed mechanism under different market parameters and system settings in Subsection~\ref{sec:expauction}.
\subsection{Experimental Setups}\label{sec:expsetup}
\begin{figure*}[t]
\vspace{-0.5cm}
    \centering
    \subfigure[Social surplus v.s. number of AVs.]{\includegraphics[width=0.32\linewidth]{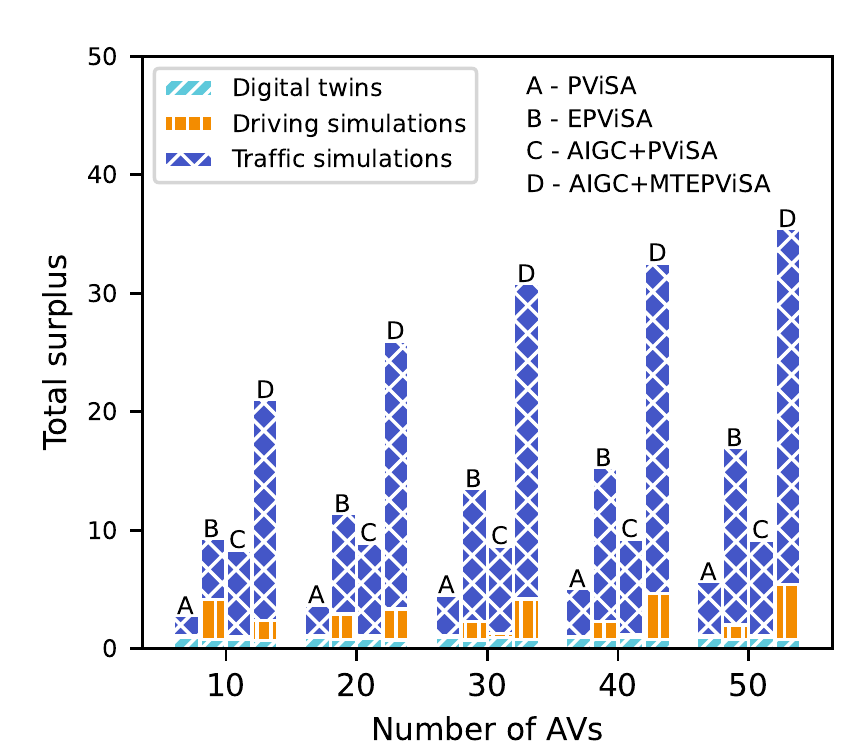}%
        \label{fig:av}}
    \subfigure[Social surplus v.s. number of traffic simulators.]{\includegraphics[width=0.32\linewidth]{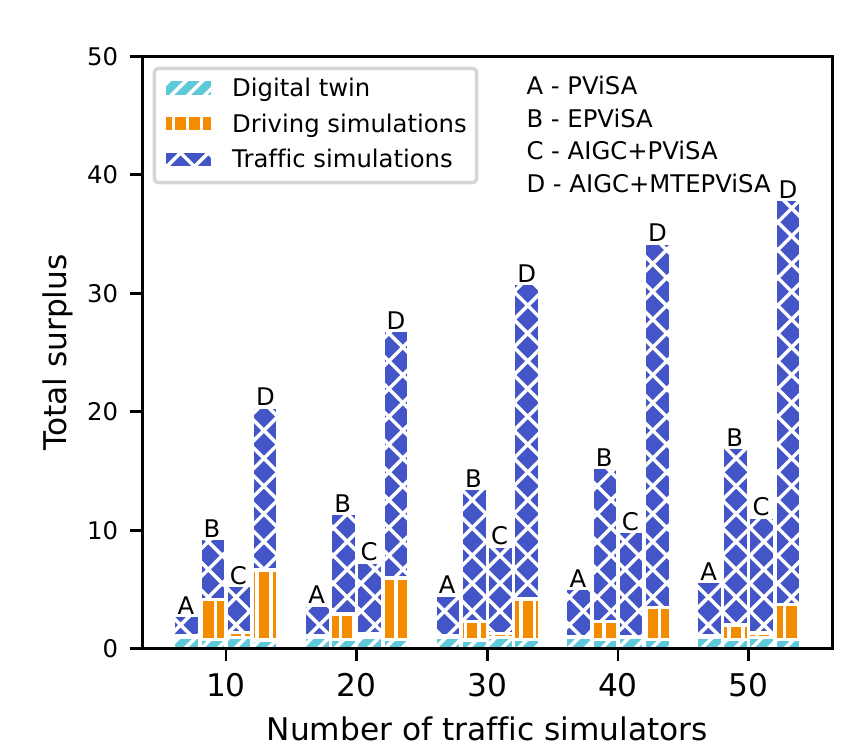}%
        \label{fig:ar}}
    \subfigure[Social surplus v.s.  generative score.]{\includegraphics[width=0.32\linewidth]{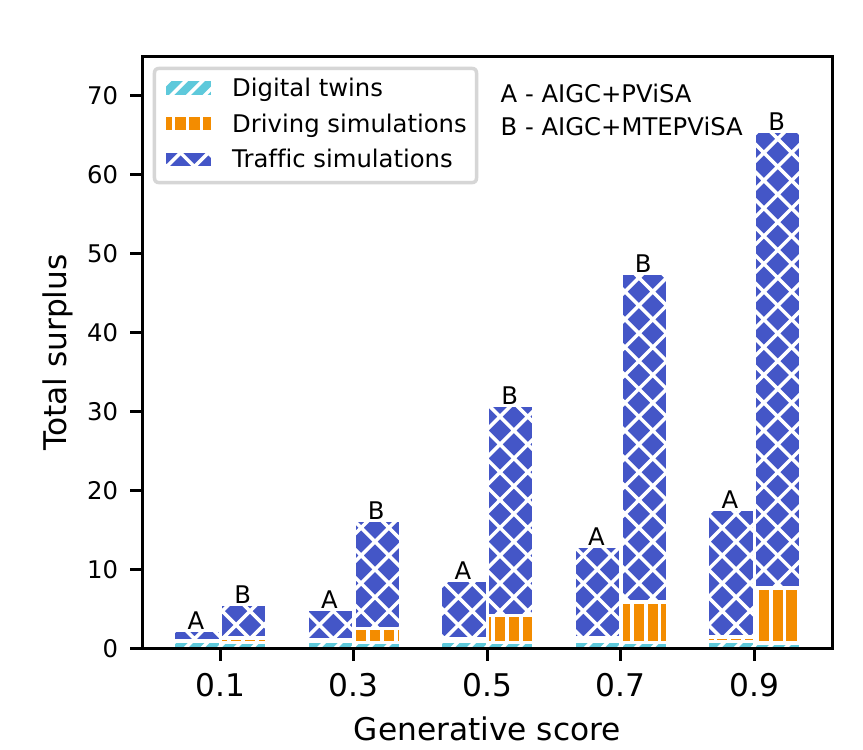}%
        \label{fig:sa}}
    \caption{Performance evaluation under different sizes of  the market and generative scores.}
    \label{fig:sw}
\end{figure*}
In the simulation of the vehicular MR Metaverse, we consider an autonomous driving system with 30 AVs, 30 virtual simulators, and 1 RSU by default. For each RSU, 20 MHz uplink and 20 MHz downlink channels are allocated for DT task uploading and AR recommendation streaming, respectively. In addition, the CPU frequency of RSU is set to 3.6 GHz, and the GPU frequency is set to 19 GHz. The channel gain between RSUs and AVs is randomly sampled from $U[0, 1]$, where $U$ denotes the uniform distribution. The transmit power of AVs is randomly sampled from $U[0, 1]$ mW and the transmit power of RSUs is randomly sampled from $U[0, 5]$ mW. The additive white Gaussian noise at AVs and RSUs is randomly sampled from $\mathcal{N}(0, 1)$, where $\mathcal{N}$ denotes the normal distribution. For each DT task generated by AV, the data size is randomly sampled from $U[0, 0.5]$ MB,  the required CPU cycles per unit data are randomly sampled from $[0, 2]$ Gcycles/MB, and the required deadline is randomly sampled from $U[1, 1.5]$ seconds. For each simulation, the data size is randomly sampled from $U[0, 2.5]$ MB and the required GPU cycles per unit data are randomly sampled from $U[0, 5]$ Gcycles/MB. The valuation of AVs for accomplishing the DT tasks is randomly sampled from $U[0, 1]$ and the number of preferences of AVs is sampled from $Zipf(2)$, where $Zipf$ denotes the Zipf distribution. The relative bargaining power of the offline virtual simulator  is set to 1 while the default  accuracy is 0.5. For digital twin-assisted vehicular movement prediction, we set the past $P$ steps to 60 and the future $F$ steps to 5. In addition, we set epoch $e$ to 500, batch size to 40, and dropout to 0.05. The default local relative accuracy is set to 0.53~\cite{tran2019federated} and the default generative score is sampled from $U[0.4, 0.6]$.

The simulation environment for the vehicular MR Metaverse was created using a 3D model of a few city blocks in New York City. Geopipe, Inc. developed the model by utilizing AI to build a digital replica from photographs captured throughout the city. The simulation involves an autonomous car navigating through a road, surrounded by artificially placed highway advertisements. Eye-tracking data was collected from human participants who were immersed in the simulation using the HMD Eyes addon from Pupil Labs. Following the simulation, the participants completed a survey to assess their subjective opinion level of interest in each simulation.

\subsubsection{Digital Twin-assisted Vehicular Movement Prediction}\label{sec:expdt}
Through continuously updating DTs in the virtual space, AVs can leverage the results of online traffic simulations for improving driving safety and traffic efficiency. Specifically, we use the historical trajectory data of AVs in DT to predict their future movements to make the concept of DT-assisted autonomous driving more concrete. Let the location of $i$ at time slot $t$ be $p_i^t = (x_i^t,y_i^t)$, where $x_i^t$ and $y_i^t$ are longitude and latitude of AV $i$, respectively. The historical trajectory of AV $i$ consists of the last $P$ locations can be represented as $\tau_i^\emph{past}(t) = (p_i^{t-P},\ldots,p_i^{t-1},p_i^t)$. When RSUs leverage AI models to predict the future movement of AVs that can be represented as $\mathcal{A}_j$ for RSU $j$. Then, the past trajectories input into the AI model of RSU $j$ predicts the movement $\tau_i^\emph{pre}(t) = \mathcal{A}_j(\tau_i^\emph{past}(t)) = (p_i^{t+1},p_i^{t+2},\ldots, p_i^{t+F})$ in the future $F$ steps of the vehicles and simulate the movements in the virtual space. In the training module, the AI model is evaluated by the mean squared error (MSE), i.e., the training loss is calculated as $\mathbb{E}_{\tau_i^\emph{past}(t), \tau_i^\emph{true}\sim \emph{DT}_i}(\tau_i^\emph{pre}(t)-\tau_i^\emph{true}(t))^2$. Finally, the running performance is evaluated by the R2 score $\mathcal{R}_{i,j}$, which is 1 when the predicted movements are perfectly correlated with the true movements.

\begin{figure*}[t]
\vspace{-0.5cm}
    \centering
    \subfigure[Total surplus v.s. number of tasks.]{\includegraphics[width=0.24\linewidth]{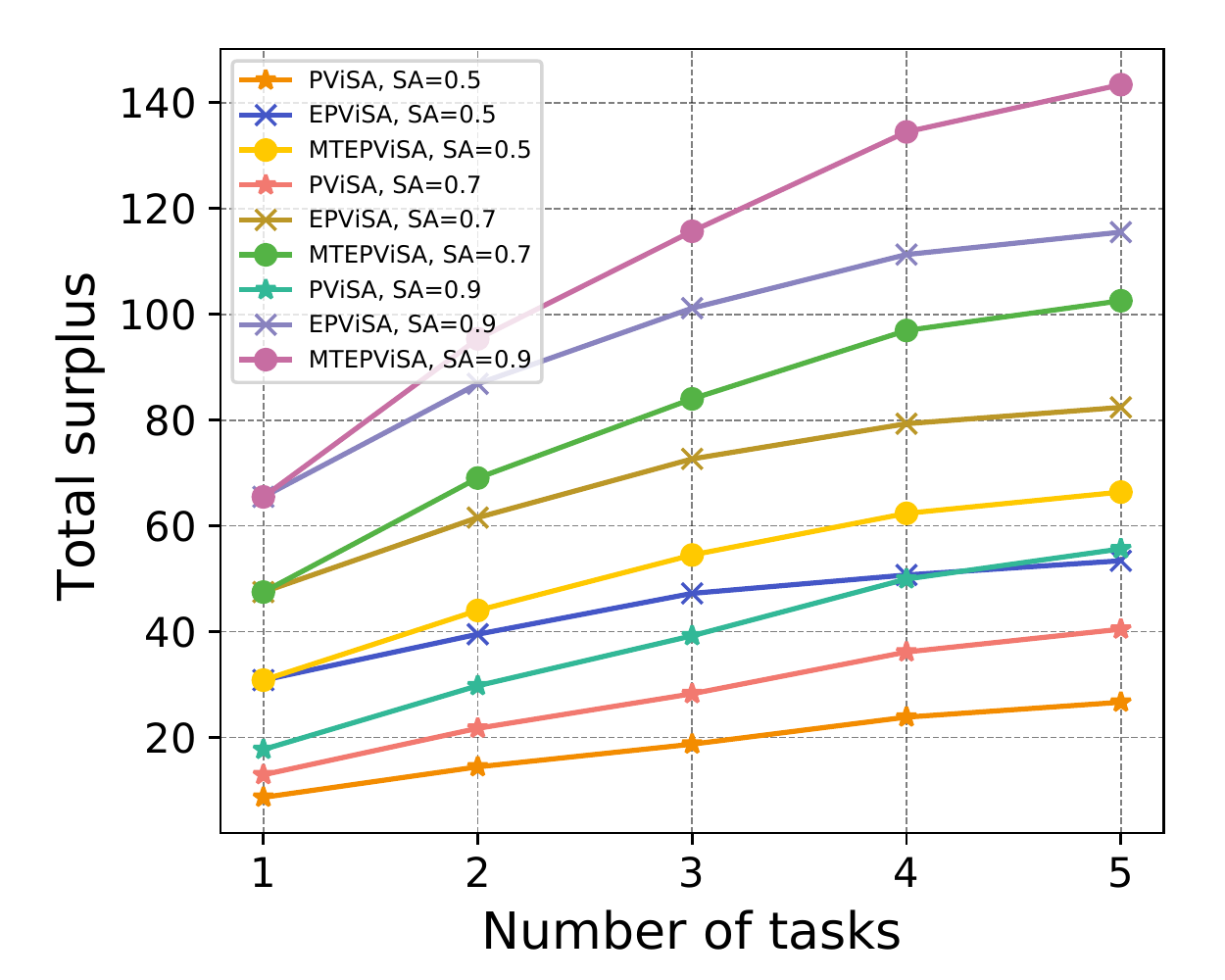}%
        \label{fig:revenuetotal}}
    \subfigure[DT surplus v.s. number of tasks.]{\includegraphics[width=0.24\linewidth]{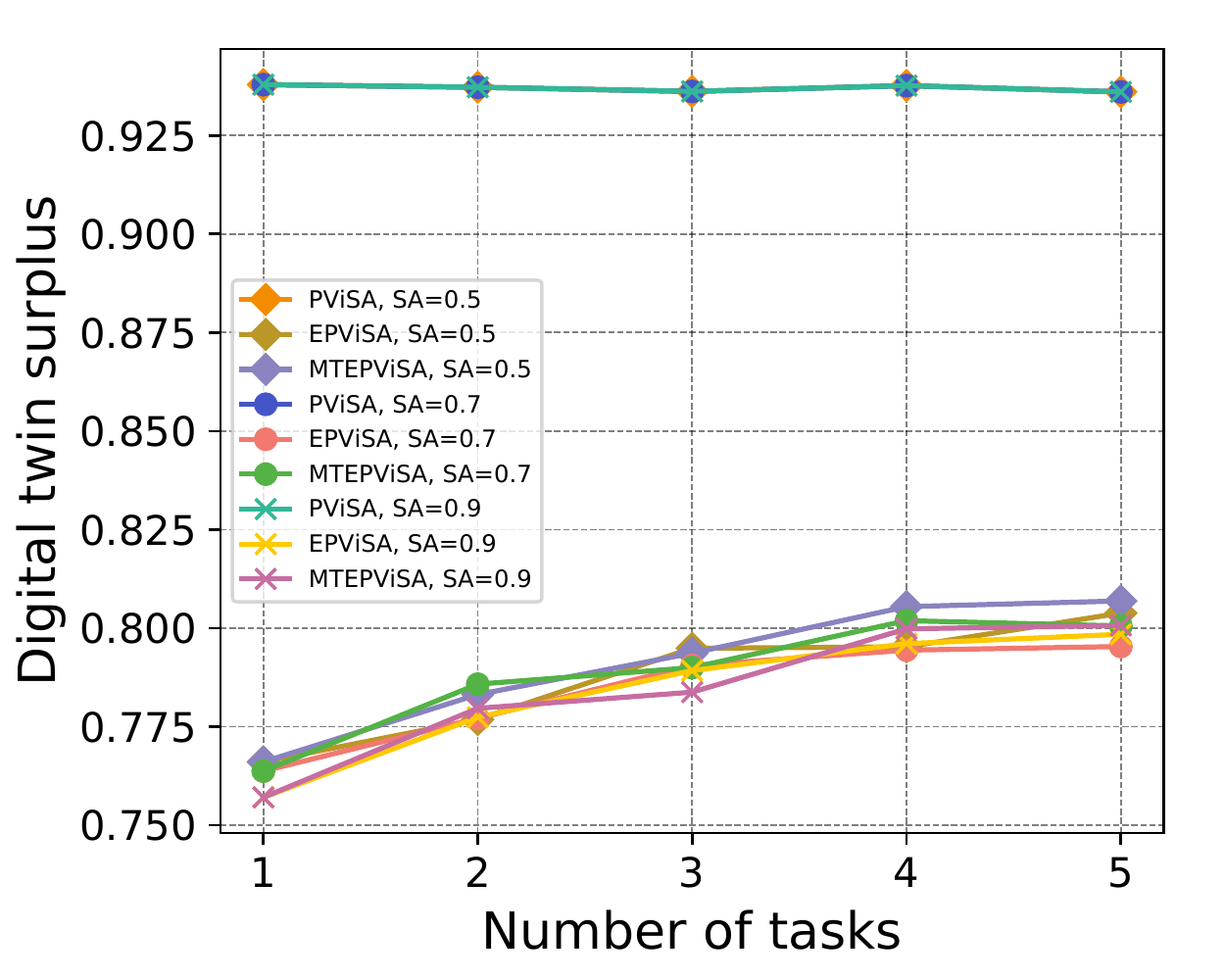}%
        \label{fig:revenuedt}}
    \subfigure[Traffic simulation surplus v.s. number of tasks.]{\includegraphics[width=0.24\linewidth]{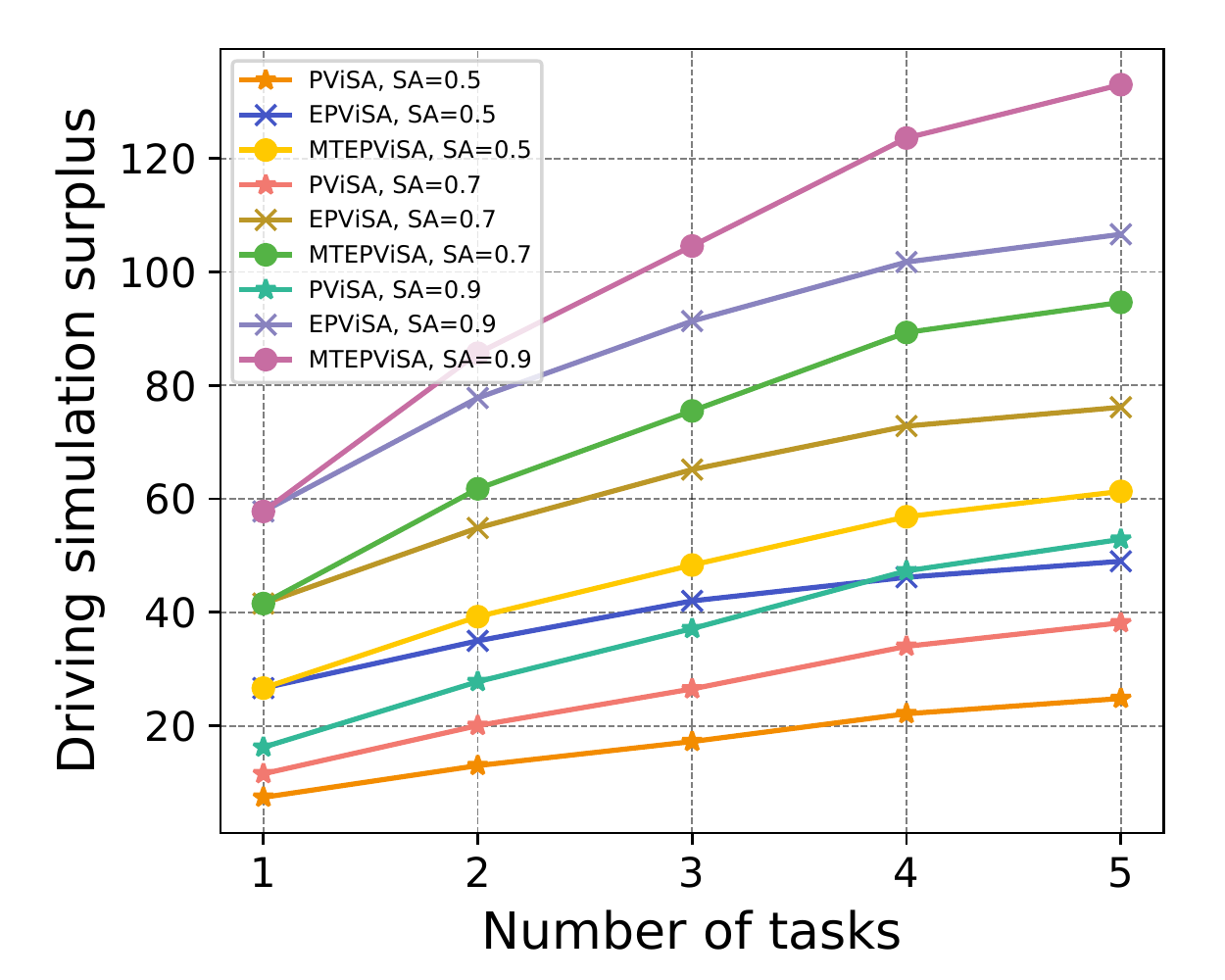}%
        \label{fig:revenuear1}}
    \subfigure[Driving simulation surplus v.s. number of tasks.]{\includegraphics[width=0.24\linewidth]{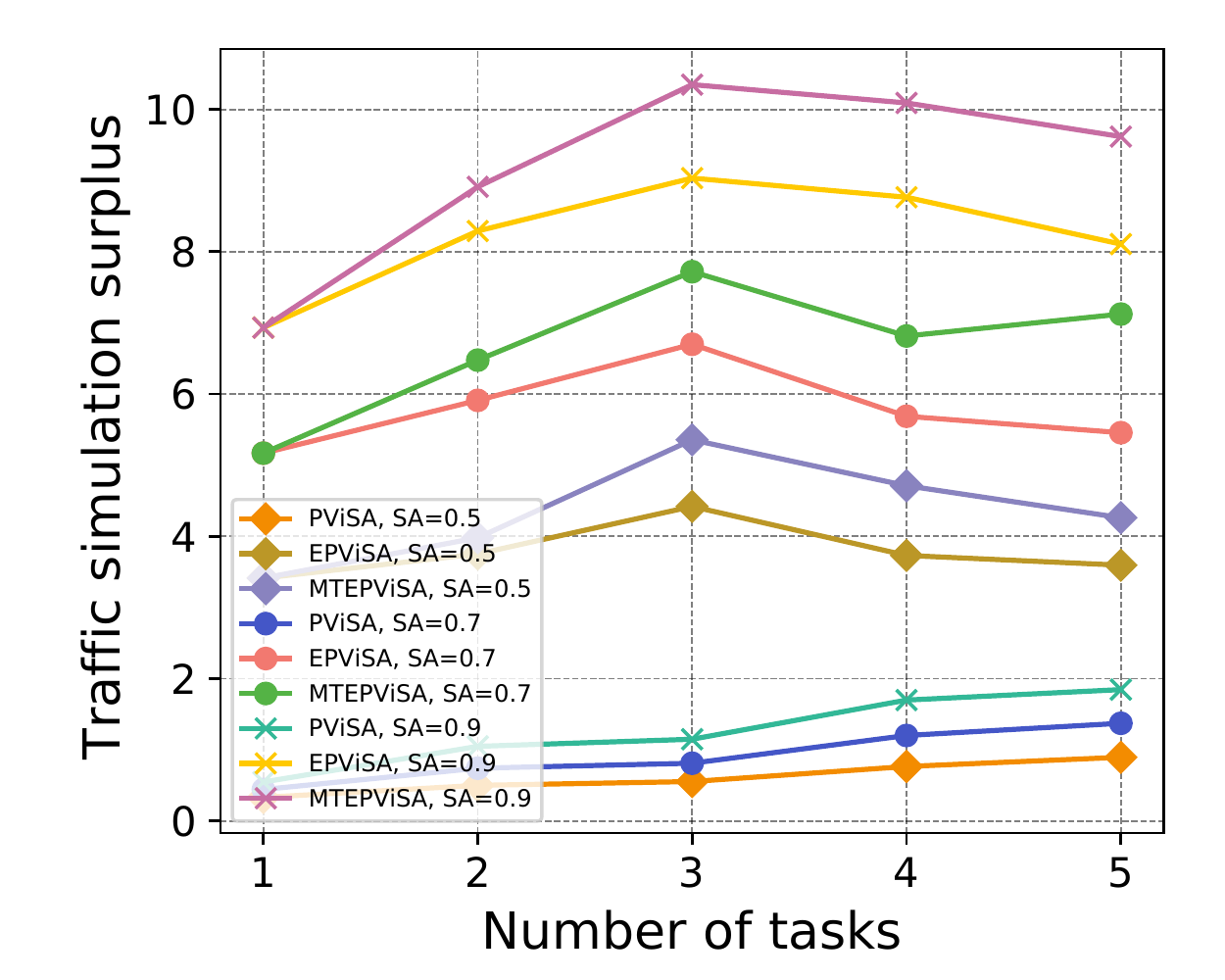}%
    \label{fig:revenuear2}}
    \caption{Performance evaluation of simulated experiments under different generative scores and numbers of tasks.}
    \label{fig:simulationrevenue}
\end{figure*}

As shown in Fig.~\ref{fig:trajectory}, we use four trajectory collections sampled from the dataset in~\cite{wang2019experience} to demonstrate the effectiveness of movement prediction of AVs based on the current location and historical routes. We select the LSTM model as the movement prediction model. We use the R2 score to evaluate the performance of the movement prediction AI model. R2 is commonly understood as using the mean as the error benchmark to see if the prediction error is greater or less than the mean benchmark error. When R2 score = 1, the predicted and true values in the sample are exactly equal without any error, indicating that the better the independent variable explains the dependent variable in the regression analysis. Based on this experiment, we collect the results of the prediction in the system to calculate the simulation accuracy in the virtual space. 


  
\subsection{Traffic Sign Synthesizing in Generative AI-empowered Simulation}\label{sec:expar}


Generative AI based on large text-to-image models, such as stable diffusion~\cite{rombach2022high} and Dreambooth~\cite{ruiz2022dreambooth}, will have a game-changing impact on content creation in the MR Metaverse. Dreambooth is a personalized diffusion model that learns to preserve the features of the specific subject and then generates new images based on this subject. To demonstrate the ability to generate diverse and high-quality images for the vehicular MR Metaverse. As illustrated in Fig.~\ref{fig:AIGC}, we experiment with modifying background color and re-contextualization for traffic signs, which is the iconic task for transportation systems.
\begin{figure*}[t]
\vspace{-0.5cm}
    \centering
    \subfigure[Total surplus v.s. number of tasks.]{\includegraphics[width=0.24\linewidth]{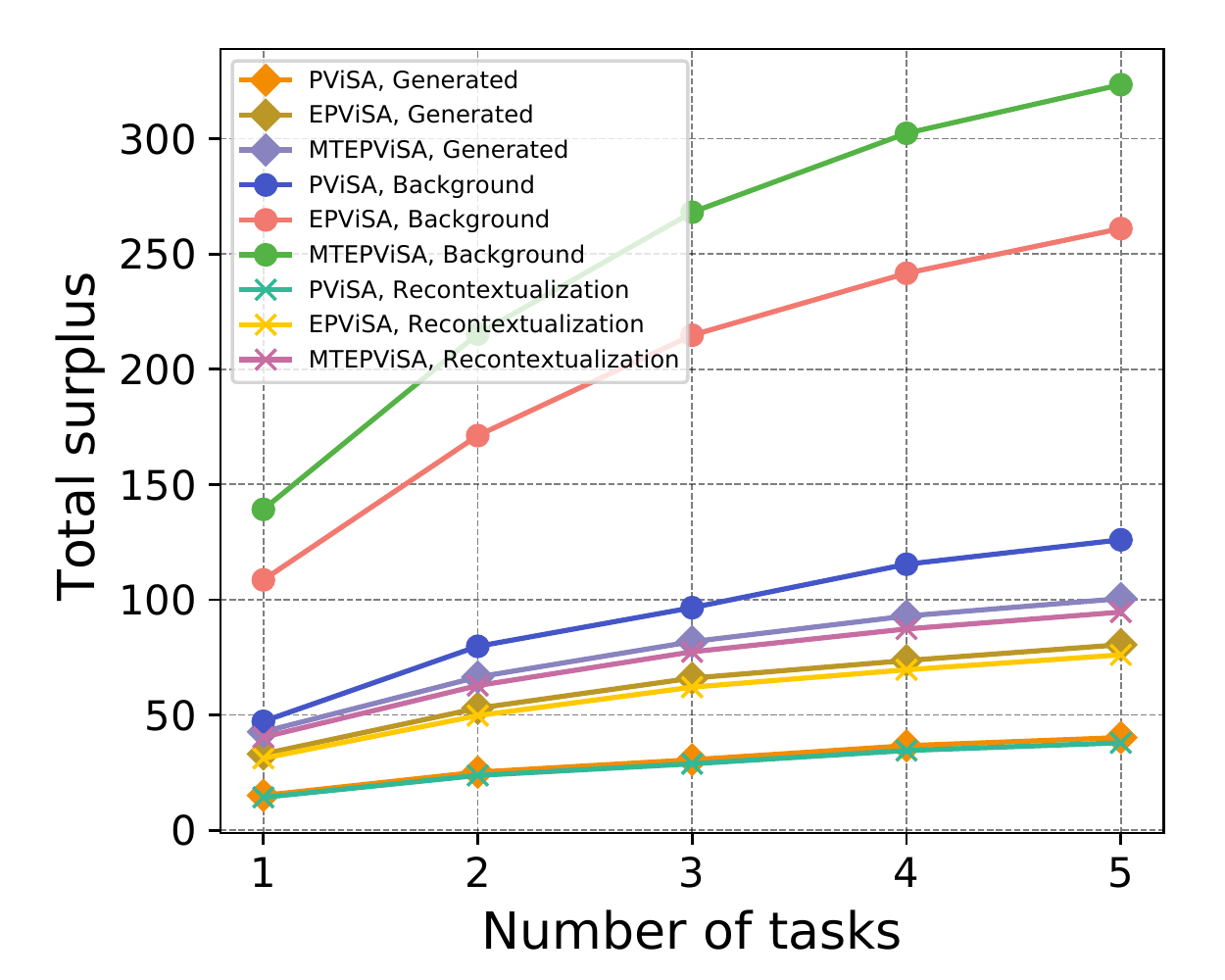}%
        \label{fig:real_revenuetotal}}
    \subfigure[DT surplus v.s. number of tasks.]{\includegraphics[width=0.24\linewidth]{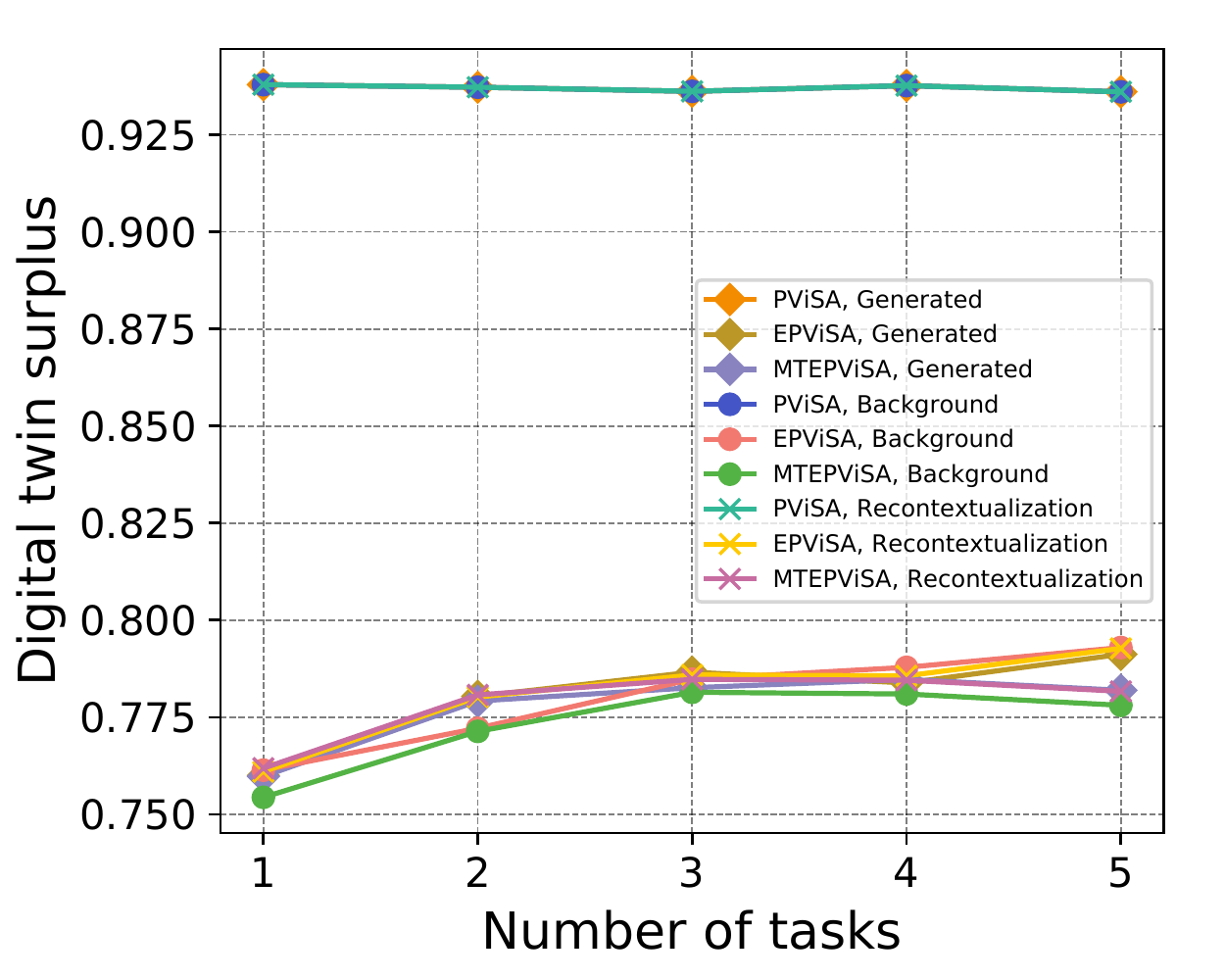}%
        \label{fig:real_revenuedt}}
    \subfigure[Traffic simulation surplus v.s. number of tasks.]{\includegraphics[width=0.24\linewidth]{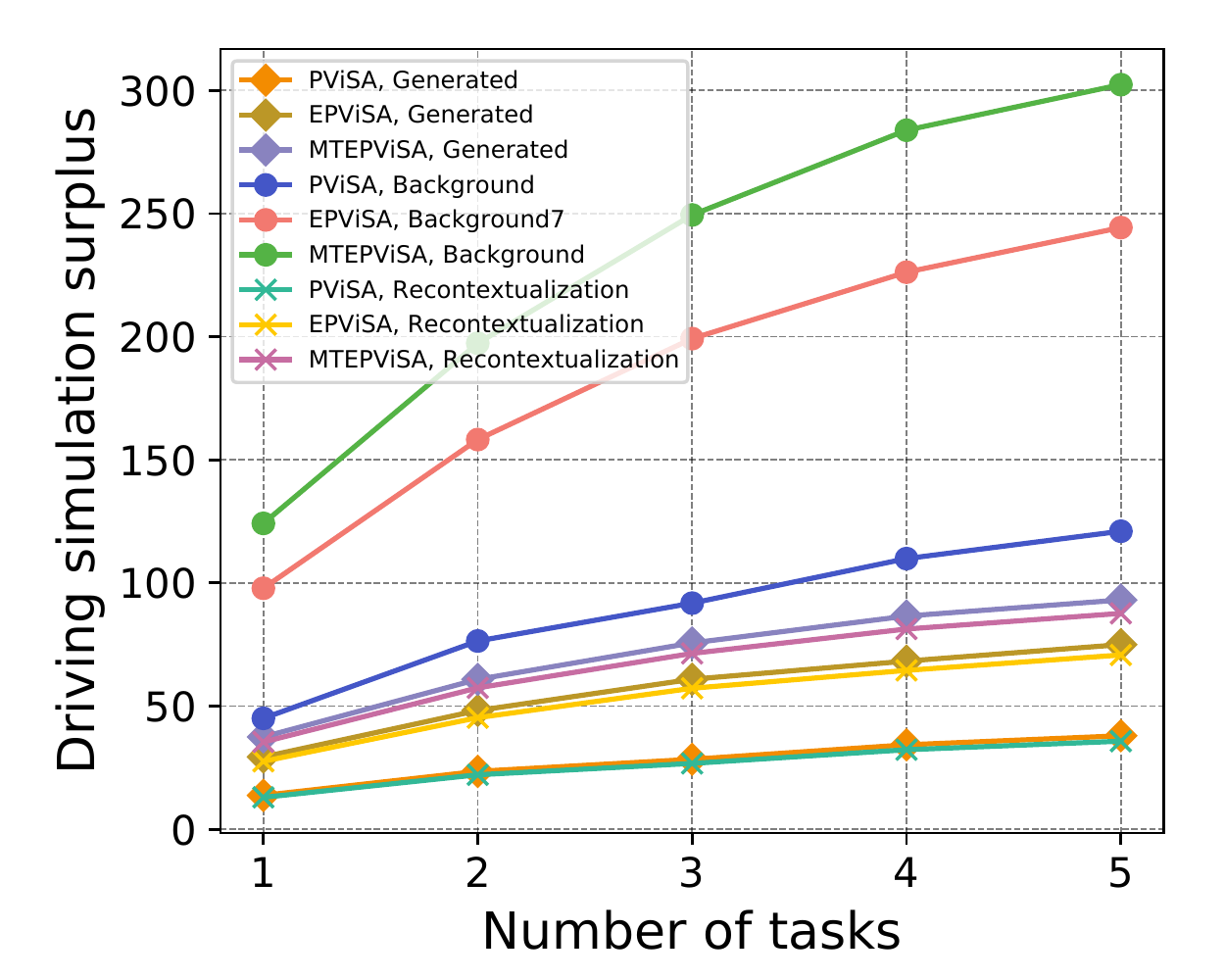}%
        \label{fig:real_revenuear1}}
    \subfigure[Driving simulation surplus v.s. number of tasks.]{\includegraphics[width=0.24\linewidth]{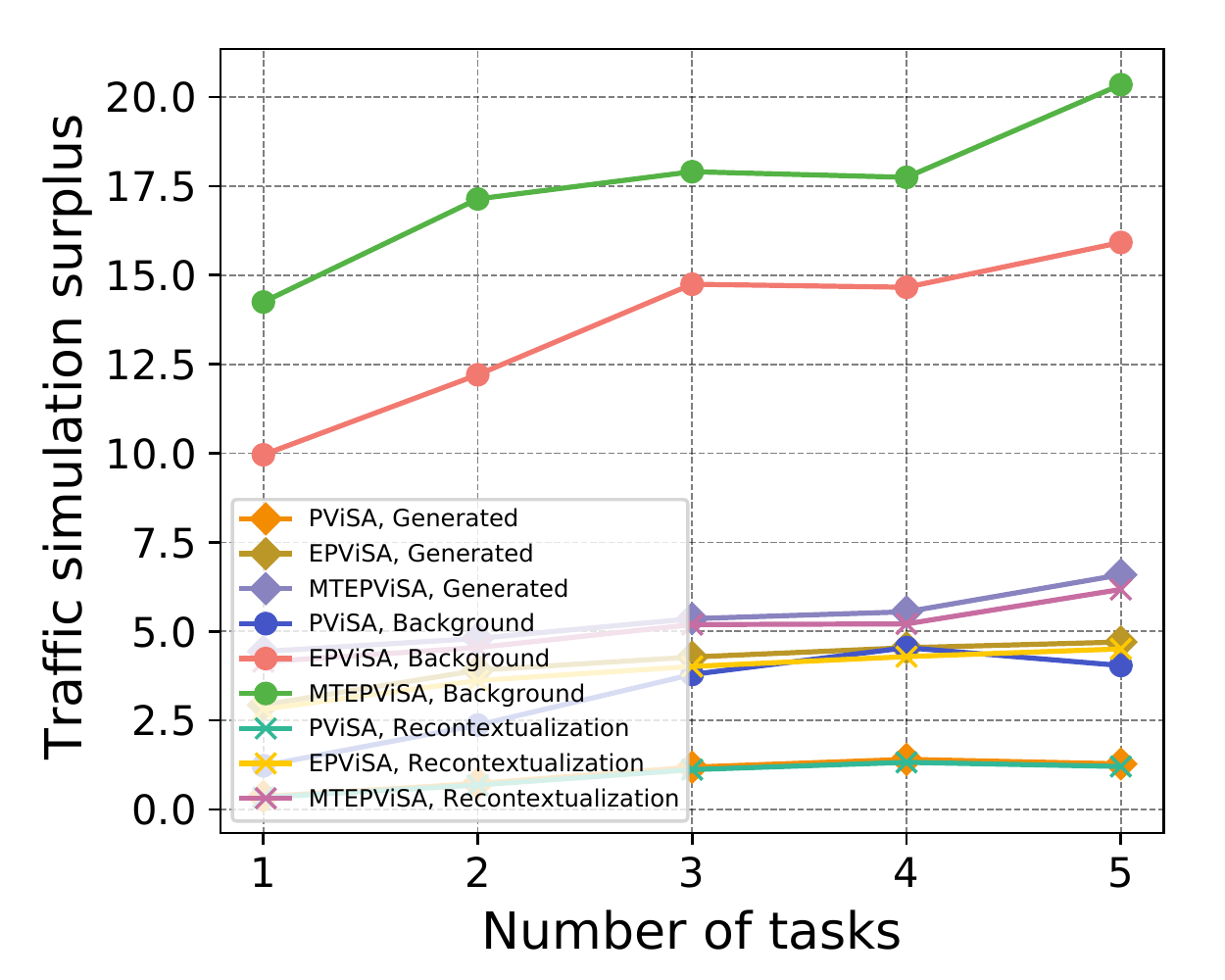}%
    \label{fig:real_revenuear2}}
    \caption{Performance evaluation of experiments under different datasets generated by TSDreamBooth and numbers of tasks.}
    \label{fig:real_simulationrevenue}
\end{figure*}
We first use the training set in BelgiumTS dataset~\cite{mathias2013traffic} to fine-tune the Dreambooth to the TSDreambooth. Then, we train a validation model based on the pre-trained GoogLeNet to fit the BelgiumTS dataset. The learning rate of the validation model is set to 0.1 and the number of epochs is set to 10. After optimizing via cross-entropy loss, the final recognition accuracy is 96\% on testing sets. Finally, we generate new images based on the testing set in BelgiumTS and evaluate the generative score using the validation model. We summarize the obtained generative score in Fig.~\ref{fig:gs} from the above experiments. As we can observe, the validation model performs almost perfectly in the real test dataset. However, for the generated dataset, the validation model can only recognize around 80\% of the images generated by TSDreambooth. Then, the synthesized datasets are leveraged for fine-tuning the validation model. During the fine-tuning, the batch size is set to 30 for one iteration. Finally, we obtain the local relative accuracy for the whole generated dataset ($\theta=0.82$), the background modification dataset ($\theta=0.42$), and the recontextualization dataset ($\theta=0.85$).

\subsection{Performance Evaluation of Auction-based Mechanisms}\label{sec:expauction}

Then, we evaluate the performance of the proposed mechanism under different system settings compared with the PViSA and the EPViSA proposed in~\cite{xu2022epvisa}. From Fig.~\ref{fig:revenuedt}, we can understand the reason for the inefficiency of the PViSA. The PViSA mechanism always selects the AV with the highest valuation in the online submarket to synchronize while ignoring the potential surplus in the offline submarket. 

\subsubsection{Performance Evaluation under Different System Parameters}

By setting the number of tasks to 1, the achieved surplus in the system under different market parameters is demonstrated in Fig.~\ref{fig:sw}. As the market size becomes larger, both the number of buyers in Fig.~\ref{fig:av} and the number of sellers in Fig.~\ref{fig:ar}, the surplus achieved by the proposed framework becomes increasingly higher. A larger market size will lead to more competition among traders in the market, resulting in a higher surplus for RSUs in providing services. As we can observe, the generative AI-empowered simulations can increase surplus by at least 150\% compared with the simulations without generative AI. In addition, the proposed MTEPViSA mechanism can improve more than half of the surplus compared with the PViSA mechanism. Furthermore, the  generative score also has a substantial impact on the surplus in the simulation of the system, as shown in Fig.~\ref{fig:sa}. Therefore, the mechanism can not only select AVs and virtual simulators with a high valuation but also fine-tune the AI models  of AVs for higher accuracy.

\subsubsection{Performance Evaluation under Different System Settings}
In Fig.~\ref{fig:simulationrevenue}, we evaluate the performance of the proposed mechanism under different system settings. As illustrated in Fig.~\ref{fig:revenuetotal}, the total revenue of the virtual simulator finally increases as the number of tasks increases. The proposed MTEPViSA can double the surplus compared with the PViSA. As the number of tasks becomes higher, the performance gap between the proposed METPViSA and the EPViSA becomes larger. From Fig.~\ref{fig:revenuear1}, we can observe that the growth points of the surplus mainly rely on the surplus obtained from provisioning traffic simulation results. From Fig.~\ref{fig:revenuedt}, we can understand the reason for the inefficiency of the PViSA. The PViSA mechanism always selects the AV with the highest valuation in the online submarket to synchronize while ignoring the potential surplus in the offline submarket. Finally, the MTEPViSA mechanism and the PViSA mechanism can achieve a higher surplus in provisioning driving simulations by addressing the asymmetric information in the offline submarket. As illustrated in Fig.~\ref{fig:real_simulationrevenue}, using the synthesized datasets of TSDreambooth, we obtain the total, DT, traffic simulation, driving simulations surpluses for the generated, background modification, and re-contextualization datasets. We can see that the growth trend of the surplus for each mechanism in the figure is similar to that in the simulation. However, since the quality of the data set in the real experiment is not as good as in the simulation, the distribution of the results in the experiment is relatively uneven. This can be seen most clearly in Fig.~\ref{fig:real_revenuetotal} and Fig.~\ref{fig:real_revenuear1}. These results also show that although the generated datasets improve the performance of AI models in AVs compared to the original datasets, the improvement is not homogeneous depending on the datasets generated by different preferences.

\section{Conclusion}\label{sec:con}
In this paper, we have proposed a generative AI-empowered autonomous driving architecture for the vehicular Metaverse. In this architecture, we have proposed the multi-task DT offloading model for reliably executing AVs' DT tasks with different requirements at RSUs. In addition, we have leveraged the generative AI models to synthesize diverse and conditioned driving simulation datasets for AVs' offline training. Finally, we have devised the multi-task enhanced auction-based mechanism to incentivize RSUs to support the simulation systems for autonomous driving systems. The property analysis has validated that the proposed mechanism is strategy-proof and adverse-selection free. The experimental results have demonstrated that the proposed mechanism can increase the social surplus by around 150\%.

\bibliographystyle{IEEEtran}
\bibliography{main}

\end{document}